\DeclareMathOperator{\sgn}{sgn}
\def\BibTeX{{\rm B\kern-.05em{\sc i\kern-.025em b}\kern-.08em
    T\kern-.1667em\lower.7ex\hbox{E}\kern-.125emX}}
\newtheorem{theorem}{Theorem}
\newtheorem{remark}{Remark}
\begin{document}

\sptitle{Article Category}

\title{Self-navigation in crowds: An invariant set-based approach} 


\author{VEEJAY KARTHIK J\affilmark{1}}

\author{$\;$LEENA VACHHANI\affilmark{1}} 


\affil{Indian Institute of Technology Bombay, Mumbai 400076, India }

\corresp{CORRESPONDING AUTHOR: Leena Vachhani (e-mail: \href{mailto:leena.vachhani@iitb.ac.in}{leena.vachhani@iitb.ac.in})}
\authornote{This work was supported by the Prime Minister's Research Fellowship (PMRF).\\ PMRF ID: 1302088}

\markboth{SELF-NAVIGATION IN CROWDS: AN INVARIANT SET-BASED APPROACH}{VEEJAY. K. J {\itshape ET AL}.}

\begin{abstract}
Self-navigation in non-coordinating crowded environments is formidably challenging within multi-agent systems consisting of non-holonomic robots operating through local sensing. Our primary objective is the development of a novel, rapid, sensor-driven, self-navigation controller that directly computes control commands to enable safe maneuvering while coexisting with other agents. We propose an input-constrained feedback controller meticulously crafted for non-holonomic mobile robots and the characterization of associated invariant sets. The invariant sets are the key to maintaining stability and safety amidst the non-cooperating agents. We then propose a planning strategy that strategically guides the generation of invariant sets toward the agent's intended target. This enables the agents to directly compute theoretically safe control inputs without explicitly requiring pre-planned paths/trajectories to reliably navigate through crowded multi-agent environments. The practicality of our technique is demonstrated through hardware experiments, and the ability to parallelize computations to shorten computational durations for synthesizing safe control commands. The proposed approach finds potential applications in crowded multi-agent scenarios that require rapid control computations based on perceived safety bounds during run-time.
\end{abstract}

\begin{IEEEkeywords}
Multi-agent, Navigation, Invariant sets, Feedback control, Non-holonomic robot, Sensor-based planning
\end{IEEEkeywords}

\maketitle

\section{INTRODUCTION}

Navigation in crowded environments is a formidably challenging task due to the inherent uncertainties and unpredictable nature of moving agents in the robot's vicinity. Since predicting the trajectories of other moving agents in the vicinity is often very difficult during run-time in such scenarios, finding safe paths/trajectories for navigation (in standard decoupled plan-control approaches) becomes computationally super complex.  As exact models of evolving scenarios are often unavailable, the use of onboard sensors, such as LiDAR and cameras, for real-time perception is vital for ensuring collision avoidance and safety. However, these sensor measurements have inherent uncertainties, and in dynamic scenarios, the reliability of sensed information is limited to short durations. Noisy sensor measurements and occlusions of agents (blocked views) also often lead to inaccurate interpretations of the environment, and they pose challenges for ensuring safety during navigation. This necessitates robust and safe approaches to planning and control for navigation. Consequently, understanding the intricate interplay between these two domains is crucial as each contributes significantly to safe robot navigation in crowded environments.

In a nutshell, determining ``what" is the required motion to accomplish the given task specification is the planning problem, and ``how" to execute this required motion with the actuators of the system is the controls problem. Self-navigation is the ability of a robot to autonomously sense, plan, and control its movements in an environment without external guidance. This capability allows the robot to navigate, avoid obstacles, and reach specific destinations independently.

In the context stated earlier, rapid motion-planning algorithms capable of generating provably safe plans under uncertainties in the onboard sensory information for quick execution are inevitable for operation in dynamic, crowded settings. Furthermore, compliance of the generated motion plan with the system dynamics is essential for its faithful execution through the actuators of the system. Since predominant mobile robotic systems such as legged and wheeled robots are underactuated, their planning and control problems are non-trivial. In crowded navigation scenarios, collision avoidance \cite{CooperativeCollisionAvoidanceNonHolRobots} is paramount to ensure safety. The state-of-the-art methods available in the literature for tackling such problems are invariably through online optimization/ optimal control strategies. In recent times, learning control policies through deep reinforcement learning methods have also emerged as tools for tackling crowd navigation problems.

Trajectory optimization (TO) and its variants are prominent techniques for generating motion plans amiable to the system dynamics for robotic systems available in the literature. The TO-based methods \cite{TrajOptMultiRobotNav} deployed in multi-agent navigation applications operate through motion prediction where every agent tries to predict a continuous trajectory for its neighbors (through the communication of states among agents) to plan its own for ensuring collision avoidance. Model predictive control (MPC) schemes \cite{ExperimentalSafeMPC,MultiTrajectoryMPCUAV,NMPCMultiRobot,TubeMPCMultiRobot} are class a of optimization-based feedback controllers with capabilities to encode the system dynamics and their associated constraints (on both states and inputs) in their online, finite horizon optimization problem for generating amiable control inputs. Further, the planning horizons are not fixed but are practically limited by the available onboard computational resources to safely maneuver through the dynamically varying collision-free regions ahead of the robot.

Set invariance is the fundamental property that is sought in control theory to ensure safety in systems. In recent times, set invariance is enforced through control barrier functions (CBFs) \cite{CBFTheoryApplications,CBFInputtoStateSafety,BarrierFunctionsMultiRobotNavigation} and barrier certificates \cite{BarrierCertificatesMultiRobot} through optimization-based formulations to ensure safe navigation by minimally modifying their nominal stabilizing controllers. The key challenges involved in ensuring safety through such formulations are that the construction of CBFs is non-trivial when there are constraints on inputs \cite{ICCBF}, when the constructed CBF has a relative degree more than one, and when there are bounded disturbances\cite{BREEDEN2023111109}. Schemes combining CBFs with MPC and motion prediction modules have been proposed for tackling robot navigation in crowds \cite{NMPCCBFCrowd}. However, scalability and computational duration are key factors to be considered in such navigation paradigms, as they degrade in performance with an increasing number of agents in the crowd.

MPC-based approaches require significant computational capabilities as the solution to the online optimization problem often has non-convex constraints (distance constraints) arising due to obstacles in the workspaces, and non-linearities in the system model. Determining the global optimizer becomes non-trivial, and only locally optimal solutions are possible. Real-time demonstrations of such approaches are carried out through approximations to ease the computational complexities \cite{NMPCApproxTraffic} during run-time. Furthermore, striking a balance in choosing tuning parameters such as prediction horizon becomes crucial since, short horizons might result in unaccountability towards long-term effects, and long horizons increase the computational complexity. Consequently, their reliability is subject to prediction accuracy (through numerical integration of discretized system models), availability of onboard computational resources, and careful selection of the tuning parameters. Also, in MPC formulations where the CBF conditions are set as constraints \cite{CBFMPC}, it is critical to ensure that the online optimization problem does not become infeasible during run-time to prevent adverse consequences. 

In crowded environments consisting of humans, social navigation \cite{SociallyAwareNavigation} approaches try to predict human behaviors for incorporation into the planning process to enable natural and safe interactions. The key challenge is to accurately model human behavior (which often varies with the diversity of the crowd) to enable safe operation. Consequently, sophisticated data-driven motion prediction models have been developed to enhance predictions for safety. However, recent works \cite{SOTApredictionNavperformance} suggest that the existing state-of-the-art motion prediction methods do not provide significant enhancements in navigation performance yet.  

Deep reinforcement learning-based methods \cite{CrowdAwareDRL} utilize neural networks to directly learn control policies from experience, and are often suited for diverse crowd conditions and complex environments. They require vast training data, and their learned control policies often tend to be biased and potentially unsafe. Further, since neural networks are involved, it also becomes infeasible to reason about the source of potential unsafe behaviors that might be induced. As real-time deployment capabilities and safety are non-negotiables for operation in practical crowded scenarios, research into alternate sensor-based planning paradigms becomes a necessity.

\textbf{To circumvent the computational complexity in computing amiable reference trajectories/ feedback control inputs through solving online optimizations by probing through the state space of the system at each instance, we seek to answer the question: If we know beforehand how the system moves for a given reference, can we plan accordingly? More specifically, if concrete geometric characterizations of the regions on the state-space (state constraints) where naturally induced system trajectories are going to evolve can be pre-determined through a pre-synthesized controller, can we plan how to compute control inputs through such controllers so that the naturally induced system trajectories respect the bounds of safety identified during run-time? The proposed work answers these questions and provides the following key benefits.}
\begin{itemize}
    \item As the motion constraints associated with the system dynamics, potential disturbances during operation, and input constraints due to actuator limitations are inherently taken into consideration during the offline controller design phase, their inherent complexities need not be dealt with during the online planning phase.     
    \item When the pre-synthesized controller exhibits closed-form solutions, control inputs can be readily computed through quick evaluation of algebraic expressions during run-time.
    \item When specific geometric characterizations of regions (state constraints) where the induced system trajectories are going to evolve are available beforehand, tools from computational geometry can be deployed to accelerate online computations on the hardware level during run-time. Such geometric characterizations also allow us to deal with measurement uncertainties deterministically.
\end{itemize}
Since feedback controllers are the fundamental building blocks in this work, the careful design of such controllers is a key aspect that needs to be meticulously dealt with. Essential aspects such as input constraints (based on the physical actuator limitations), and associated set invariance need to be pre-determined during the control design phase for seamless operation during run-time.

Feedback control synthesis through Lyapunov function-based techniques \cite{khalil2015nonlinear} has been the cornerstone of non-linear control design for decades. Feedback controllers synthesized through such techniques exhibit closed-form expressions, and the induced trajectories via such feedback controllers are positively invariant over a set \cite{SetInvarianceControl} whose geometric structure could be explicitly characterized through the underlying lyapunov function. Such invariant sets can be thought of as the geometric manifestation of the feedback controller itself. The stability and robustness of the feedback controllers that are designed through such techniques are proved through the negative-definite nature of the time derivative of the underlying Lyapunov function. 

Further, the generic integrated planning and control approaches in the literature seek to exploit these geometric structures to obtain motion plans directly in the space of controls for applications such as the navigation of wheeled mobile agents \cite{Conner2011}. Given the exact map of a static operating environment, sequential composition \cite{SeqComp,MultiRobotSeqComp,NonHolCartNav} is an offline strategy through which invariant sets are composed (sequenced) together such that the induced system trajectories (naturally abide by the motion constraints) through the sequential triggering of the associated feedback controller, safely converge to the target point of navigation. 

In this context, invariant sets (artifacts from lyapunov function-based non-linear control designs) are utilized to directly compute safe control inputs for navigation. These sets are well-defined geometric characterizations of the region where the naturally induced system trajectories are expected to evolve when deploying the feedback controller. As the bounds of local navigable regions can be readily obtained through onboard sensing, the problem of rapidly computing control inputs for safe navigation is tackled by strategically identifying and guiding the invariant sets. The proposed invariant set-based navigator avoids the need for explicit paths/trajectories for navigation in crowded multi-agent environments, and we formulate the problem in the next section. To the best of the author's knowledge, this is the first work in the literature that accomplishes guaranteed safe navigation in crowded multi-agent scenarios via invariant sets through onboard sensing.

\subsection{Problem Formulation}
\begin{figure}
    \centering
    \includegraphics[width = \columnwidth]{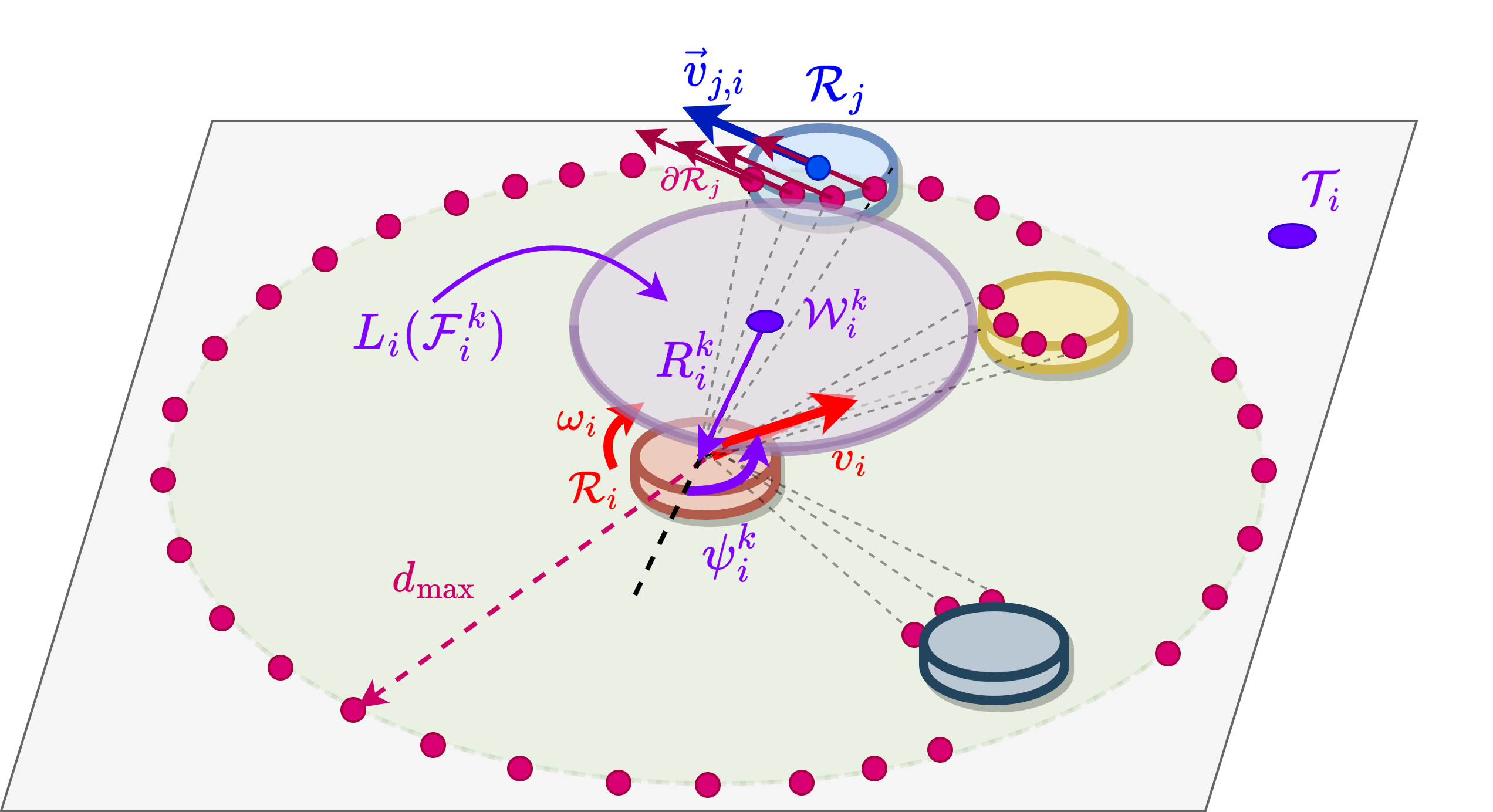}
    \caption{The Problem Formulation}
    \label{fig:ProblemFormulation}
\end{figure}

Each agent in the non-adversarial crowd is a non-holonomic mobile robot ($\mathcal{R}_i$ for each $i=1,2,...,N$) in this work, and they are modeled using the unicycle kinematics with respect to a target point $\mathcal{W}^k_i$ (identified at the $k^{\text{th}}$ instance).
\begin{align}
    \label{eqn:SystemKinematics}
    \begin{bmatrix}
        \dot{R^k_i}\\
        \dot{\psi^k_i}
    \end{bmatrix} = \begin{bmatrix}
        v_i\cos{(\psi^k_i)}\\
        \omega_i - \frac{v_i}{R^k_i}\sin{(\psi^k_i)}
    \end{bmatrix}
\end{align}
The states are its radial distance $\left(R^k_i\right)$ to $\mathcal{W}^k_i$, and relative bearing $(\psi^k_i \in (-\pi,\pi])$ is defined around the axis that originates from $\mathcal{W}^k_i$ in the direction towards ${R}^k_i$ as illustrated in Fig.~\ref{fig:ProblemFormulation}. The choice of state descriptions as $(R_i^k,\psi_i^k)$ are essentially feedback quantities described relative to $\mathcal{W}^k_i$. The control inputs are the linear ($v_i$) and angular velocities ($\omega_i$). Each $\mathcal{R}_i$ is equipped with an onboard feedback controller $\mathcal{F}_i(R^k_i,\psi^k_i;\mathcal{W}^k_i)$ that is capable of inducing system trajectories that stabilize at ${\mathcal{W}^k_i}$. Let the invariant set associated with the $\mathcal{F}_i^k(R^k_i,\psi^k_i;\mathcal{W}^k_i)$ be $L_i(\mathcal{F}_i^k)$. The mild green region represents the perception limits of the onboard sensing elements (say, LiDAR), and the pink dots represent the discrete set of measured range points - $\{P_m(\rho_m,\theta_m)\}$ in the ego-centric frame of the $\mathcal{R}_i$ that can either lie on the surface (say, $\partial \mathcal{R}_j$) of a neighboring agent or on the sensing limits ($d_{\text{max}}$) of the onboard LiDAR. The estimated velocity of the $\mathcal{R}_j$ in the ego-centric frame of the $\mathcal{R}_i$ is ${\vec{v}}_{j,i}$, and the target navigation point for $\mathcal{R}_i$ is $\mathcal{T}_i$.

In this setting, the problems addressed in this work are described as follows.
\begin{itemize}
    \item Design of an input-constrained feedback controller $\mathcal{F}_i(R^k_i,\psi^k_i;\mathcal{W}_k^i)$ that induces system trajectories such that they converge on $\mathcal{W}^k_i$, along with an explicit geometric characterization of its associated invariant set $L_i(\mathcal{F}^k_i)$ for multi-agent self and safe navigation solely through local sensing and without mutual coordination.
    \item Design of a planning strategy that iteratively identifies and induces convergence of $\mathcal{W}_k^i$ to $\mathcal{T}_i$ by successively identifying $L_i(\mathcal{F}^k_i)$ within the bounds of the safety perceived through the onboard sensing elements.
    \item We show that the proposed self-navigation algorithm provides scope for parallelization, and a demonstration of the hardware acceleration achieved through parallel processing is presented.  
\end{itemize}
\subsection{Contributions and Organization}
The key contributions of this work are organized as follows: Section \ref{sec:2} describes the design of the feedback controller and the geometry of its associated invariant set.
Section \ref{sec:3} provides a characterization to describe the invariant sets of $\mathcal{F}^k_i$ as state constraints within the sensed information obtained during run-time. Further, the proposed planning strategy to complete the description of $\mathcal{F}^k_i$ by identifying $\mathcal{W}^k_i$ to obtain the feedback control inputs for safe navigation towards $\mathcal{T}_i$ is described, and the overall self-navigation algorithm is then elaborated in this section. In Section \ref{sec:4}, the proposed algorithm is verified through MATLAB simulations, demonstrated via experiments using a set of turtlebot3 robots, and a baseline comparison is presented with a decoupled navigation strategy.  The potential of the proposed algorithm to be parallelized to accelerate real-time computations and a mathematical quantification of the acceleration achieved is presented in Section \ref{sec:5}. Finally, the potential directions of future work and conclusions are presented in Section \ref{sec:6}

\section{FEEDBACK CONTROL DESIGN}
\label{sec:2}
The objectives $(\bm{\mathcal{O}_i})$ of the feedback control design for $\mathcal{R}_i$ in this section are as follows,
\begin{itemize}
\item [$\bm{\mathcal{O}_1}$] To orient $\mathcal{R}_i$ in either parallel/anti-parallel configuration relative to the instantaneous position vector of $\mathcal{R}_i$ originating at $\mathcal{W}^k_i$. 
\item [$\bm{\mathcal{O}_2}$] To achieve convergence to $\mathcal{W}^k_i$, and explicitly characterize the geometry of $L_i(\mathcal{F}_i^k)$ associated with $\mathcal{F}_i^k$. 
\end{itemize}
The feedback controller $\mathcal{F}^k_i$ is designed in terms of the feedback states $(R^k_i,\psi^k_i)$ described relative to $\mathcal{W}^k_i$ (more details in Theorems \ref{thm:Orientation Stabilization},\ref{thm:Position Stabilization})
\begin{align}
    \label{eqn:ControlDesign}
    \begin{bmatrix}
    v_i\\
    \omega_i
    \end{bmatrix} = \begin{bmatrix}
    -K_1\tanh(R^k_i)\sgn(c(\psi^k_i))\\
    -K_2|\sigma^k_i|^{\frac{1}{2}}\sgn(\sigma^k_i) - K_1\left(\frac{\tanh(R_i^k)}{R_i^k}\right)\sgn(c(\psi^k_i))s(\psi^k_i)
    \end{bmatrix}
\end{align}
Here, $c(\cdot) = \cos(\cdot),\;s(\cdot) = \sin(\cdot),\;\sgn(\cdot)$ is the signum function, $\tanh(.)$ is the hyperbolic tangent function, and $K_1>0,K_2>0$ are the control gains (design choices). 

\begin{align}
    \label{eqn:signum function definition}
    \sgn(x) = &\begin{cases}
    1 &x>0\\
    \{\begin{matrix}
    -1 &1
    \end{matrix}\} &x=0\\
    -1 &x<0 
    \end{cases}
\end{align}
We also describe the manifold $\sigma^k_i$ based on the relative bearing ($\psi^k_i(0)$) at the instant $(t=0)$ when it begins its navigation towards $\mathcal{W}^k_i$,
\begin{align}
\label{eqn:Sigma Def}
    \sigma^k_i =  \left \{ \begin{array}{rr} \psi^k_i-\sgn(\psi^k_i)\pi , &-1 \leq\cos(\psi^k_i(0)) < 0\\
\psi^k_i, &0 \leq \cos(\psi^k_i(0)) \leq 1 \end{array} \right\}
\end{align}
Now, we show using Theorems \ref{thm:Orientation Stabilization} and \ref{thm:Position Stabilization} that the controller given by \eqref{eqn:ControlDesign} achieves the said objectives $\mathcal{O}_1$ and $\mathcal{O}_2$ for each agent $\mathcal{R}_i$ while adhering to the actuation limits.
\begin{theorem}
\label{thm:Orientation Stabilization}
The control design in $(\ref{eqn:ControlDesign})$ accomplishes $\mathcal{O}_1$, while the input $\omega_i$ is constrained within $|\omega_i|\leq K_2\left(\frac{\pi}{2}\right) + K_1$.
\end{theorem}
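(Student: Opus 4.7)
The plan is to exploit the structural cancellation built into the control law (2): substituting $v_i$ and $\omega_i$ into the $\dot\psi_i^k$ equation of (1), the $-(v_i/R_i^k)\sin(\psi_i^k)$ term is cancelled exactly by the second summand of $\omega_i$, leaving the reduced-order closed-loop dynamics $\dot\psi_i^k = -K_2|\sigma_i^k|^{1/2}\sgn(\sigma_i^k)$. This cancellation is the decisive observation; convergence and the input bound then follow from a short case analysis on the two branches of (4).

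First I would verify that $\dot\sigma_i^k = \dot\psi_i^k$ on each branch of (4). On the front-facing branch $\sigma = \psi$ this is immediate; on the back-facing branch $\sigma = \psi - \sgn(\psi)\pi$ the offset is piecewise constant in $\psi$ away from $\psi=0$. An invariance argument then confirms that trajectories stay on their initial branch: on the front-facing branch $|\psi(0)|\leq\pi/2$, and $\dot\sigma=-K_2|\sigma|^{1/2}\sgn(\sigma)$ renders $|\sigma|$ strictly non-increasing, so $|\psi(t)|\leq\pi/2$ and $\sgn(\cos(\psi_i^k))$ remains consistent with the branch committed at $t=0$; a symmetric argument on the back-facing branch drives $\psi$ toward $\pm\pi$ without crossing zero. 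The scalar ODE $\dot\sigma=-K_2|\sigma|^{1/2}\sgn(\sigma)$ is a standard finite-time stabilizer, which I would verify with the Lyapunov candidate $V=\sigma^2$, for which $\dot V = -2K_2|\sigma|^{3/2} \leq -2K_2 V^{3/4}$, giving $\sigma_i^k\to 0$ in finite time and hence $\psi_i^k\to 0$ or $\psi_i^k\to\pm\pi$, establishing $\mathcal{O}_1$.

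For the input bound, the triangle inequality gives
\begin{equation*}
|\omega_i| \leq K_2 |\sigma_i^k|^{1/2} + K_1 \left|\frac{\tanh(R_i^k)}{R_i^k}\right| |\sin(\psi_i^k)|.
\end{equation*}
Using $|\tanh(x)/x| \leq 1$ for $x\geq 0$ (with limit $1$ at the origin) and $|\sin\psi|\leq 1$ bounds the second term by $K_1$; from the invariance analysis above $|\sigma_i^k|\leq \pi/2$, so $|\sigma_i^k|^{1/2}\leq \sqrt{\pi/2}\leq \pi/2$, bounding the first term by $K_2(\pi/2)$ and yielding the stated estimate. The main obstacle I anticipate is not the Lyapunov step but the bookkeeping around the non-smooth, piecewise definition of $\sigma_i^k$: confirming that $\sgn(\psi)$ in the offset does not flip during the trajectory, that $\psi$ does not leak across $\pm\pi/2$ into the opposite branch of $\sgn(\cos(\psi))$, and that the boundary identification $\psi=\pi \sim -\pi$ (same physical orientation, different conventions) is handled consistently so that finite-time convergence is realized at the intended anti-parallel configuration; a Filippov-type interpretation at the switching surface $\sigma=0$ may also be invoked to justify the solution concept across the non-smoothness of $\sgn$ and $|\cdot|^{1/2}$.
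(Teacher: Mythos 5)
Your proposal follows essentially the same route as the paper: the exact cancellation of the $-(v_i/R_i^k)\sin(\psi_i^k)$ term reduces the closed loop to $\dot\sigma_i^k=-K_2|\sigma_i^k|^{1/2}\sgn(\sigma_i^k)$, finite-time convergence follows from the quadratic Lyapunov function, and the input bound follows from the triangle inequality with $|\sigma_i^k|\leq\pi/2$. Your treatment is in fact slightly more careful than the paper's in two places --- the explicit branch-invariance argument for the piecewise definition of $\sigma_i^k$, and the intermediate step $|\sigma_i^k|^{1/2}\leq\sqrt{\pi/2}\leq\pi/2$, which the paper elides --- but these are refinements of the same proof, not a different one.
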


\begin{proof}
From the definition of $\sigma^k_i$ as in (\ref{eqn:Sigma Def}),
\begin{align*}
    \dot{\sigma}^k_i = &\dot{\psi}^k_i\\
                 = &\omega_i - \frac{v_i}{R^k_i}\sin(\psi^k_i)
\end{align*}
Now, on substituting the proposed control law we get,
\begin{align}
    \label{eqn:Super Twisting Algorithm}
    \dot{\sigma}^k_i = -K_2|\sigma^k_i|^{\frac{1}{2}}\sgn(\sigma^k_i)
\end{align}
Considering the Lyapunov function $V_{1,i}^k = \frac{1}{2}(\sigma_i^k)^2$ (positive-definite and radially unbounded) for the $\mathcal{R}_i$,
\begin{align*}
    {\dot{V}}^k_{1,i} = &\sigma^k_i\dot{\sigma}^k_i \;\;\; \mbox{for any }\; k=1,2,\ldots, N\\
            = &-K_2|\sigma^k_i|^{\frac{3}{2}}\\
    \implies {\dot{V}}^k_{1,i} = &-K_2(2V^k_{1,i})^{\frac{3}{4}}
\end{align*}
The above condition results in finite-time stability (A sketch of a similar proof can be found in \cite{IntSlidingMode}). Therefore, according to the definitions of $\sigma^k_i$ as in (\ref{eqn:Sigma Def}), $\psi^k_i$ attains an equilibrium on the set $\{0,\pi\}$. (Considering the representation $\sigma^k_i \in (-\pi,\pi]$) 

The bounds on the proposed control input $\omega_i$ in (\ref{eqn:ControlDesign}) can be shown as follows, (using the triangular inequality)
\begin{align*}
    |\omega_i| \leq & \underbrace{\left|-K_2|\sigma^k_i|^{\frac{1}{2}}\sgn(\sigma^k_i)\right|}_{T_1} \\ &\quad +  \underbrace{\left|K_1\left(\frac{\tanh(R_i^k)}{R_i^k}\right)\sgn(c(\psi^k_i))s(\psi^k_i)\right|}_{T_2}
\end{align*}
The terms $(T_1,T_2)$ are bounded as follows,
\begin{align*}
    T_1 \leq &K_2\left(\frac{\pi}{2}\right) \qquad \because |\sigma_i^k| \leq \frac{\pi}{2} \\
    T_2 \leq K_1 \qquad &\because \left(\frac{\tanh(R_i^k)}{R_i^k}\right)\leq 1, s(\psi^k_i)\leq 1
\end{align*}
Therefore, the bound on control input $\omega_i$ is obtained as,
\begin{align*}
    |\omega_i| \leq &K_2\left(\frac{\pi}{2}\right) + K_1
\end{align*}
The control gains $(K_1,K_2)$ are chosen such that $\omega_i$ does not violate the available input actuator constraints. 
\end{proof}
\begin{theorem}
\label{thm:Position Stabilization}
The control design in $(\ref{eqn:ControlDesign})$ accomplishes $\mathcal{O}_2$ while ensuring both the input $|v_i|\leq K_1$, and the position trajectories are confined within an explicit invariant region $L_i(\mathcal{F}_i^k)$ as stated.
\end{theorem}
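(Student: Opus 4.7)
The plan is to extend the Lyapunov-based argument of Theorem~\ref{thm:Orientation Stabilization} from the bearing manifold $\sigma^k_i$ to the radial state $R^k_i$. The input bound is immediate: since $|\tanh(R^k_i)|\leq 1$ and $|\sgn(\cdot)|\leq 1$, the first line of \eqref{eqn:ControlDesign} yields $|v_i|\leq K_1$.

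For the explicit geometric characterization of $L_i(\mathcal{F}^k_i)$, I would take the candidate $V^k_{2,i} = \tfrac{1}{2}(R^k_i)^2$ and substitute the control law into $\dot{R}^k_i = v_i\cos(\psi^k_i)$. Using the identity $x\,\sgn(x)=|x|$ together with $R^k_i\tanh(R^k_i)\geq 0$ (recall that $R^k_i \geq 0$ is a distance), this gives $\dot{V}^k_{2,i} = -K_1\,R^k_i\tanh(R^k_i)\,\sgn(c(\psi^k_i))\,c(\psi^k_i) = -K_1\,R^k_i\tanh(R^k_i)\,|c(\psi^k_i)| \leq 0$. Hence $R^k_i(t)$ is monotonically non-increasing, and I would take $L_i(\mathcal{F}^k_i)$ as the closed disk of radius $R^k_i(0)$ centered at $\mathcal{W}^k_i$; this is the explicit invariant region confining the position trajectories for all $t\geq 0$.

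Convergence to $\mathcal{W}^k_i$ then follows by chaining Theorem~\ref{thm:Orientation Stabilization} with the reduced radial dynamics. That theorem supplies a finite time $T$ after which $\psi^k_i(t)\in\{0,\pi\}$, so that $|c(\psi^k_i)|=1$ and the radial dynamics collapse to the scalar ODE $\dot{R}^k_i = -K_1\tanh(R^k_i)$. The latter is globally asymptotically stable at the origin (the Lyapunov function $\tfrac{1}{2}R^2$ has derivative $-K_1 R\tanh(R)<0$ for $R>0$), giving $R^k_i(t)\to 0$. Combined with the disk-invariance above, this delivers $\mathcal{O}_2$.

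The main obstacle I anticipate is purely technical: at $\psi^k_i = \pm\pi/2$, the factor $\sgn(c(\psi^k_i))$ is set-valued per \eqref{eqn:signum function definition}, so the closed loop must be interpreted in the Filippov sense. This is benign here because $c(\psi^k_i)$ itself vanishes on the discontinuity set, so every Filippov selection produces $\dot{R}^k_i=0$ there; hence monotonicity of $R^k_i$ survives the non-smoothness, and the terminal-phase reduction is untouched since the attractor $\psi^k_i\in\{0,\pi\}$ lies strictly away from the singular bearings. A brief sentence flagging this point and pointing to Theorem~\ref{thm:Orientation Stabilization} for the finite-time escape from a neighborhood of $\pm\pi/2$ should close the argument cleanly.
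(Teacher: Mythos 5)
Your proposal follows essentially the same route as the paper's proof: the bound $|v_i|\leq K_1$ from $\tanh(R^k_i)\leq 1$, the Lyapunov function $V^k_{2,i}=\tfrac{1}{2}(R^k_i)^2$ with $\dot{V}^k_{2,i}=-K_1 R^k_i\tanh(R^k_i)\,|\cos(\psi^k_i)|\leq 0$, and the identification of $L_i(\mathcal{F}^k_i)$ as the sub-level set, i.e.\ the closed disc of radius $R^k_i(0)$ centered at $\mathcal{W}^k_i$ with the agent starting on $\partial L^k_i$. Your two refinements --- reducing the terminal phase to the scalar ODE $\dot{R}^k_i=-K_1\tanh(R^k_i)$ after the finite-time bearing convergence (where the paper instead tracks the decay rate $\gamma^k_i(t)\to 2K_1$), and the Filippov remark at $\psi^k_i=\pm\pi/2$ --- are compatible tightenings of the same argument rather than a different approach.
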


\begin{proof}
    From (\ref{eqn:Sigma Def}), the case of $\psi^k_i= \left\{-\frac{\pi}{2},\frac{\pi}{2}\right\}$ is not a fixed point of the system kinematics since the control design ensures that $\psi^k_i$ always converges to a set of points - $\{0,\pi\}$ (Theorem \ref{thm:Orientation Stabilization}). The stability of the proposed control design in accomplishing $\mathcal{O}_2$ is shown using the Lyapunov function $V_{2,i}^k$ for the $\mathcal{R}_i$,
\begin{align*}
    V_{2,i}^k = &\frac{1}{2}(R^k_i)^2 \;\;\; \mbox{for any }\; k=1,2,\ldots, N\\
    \implies {\dot{V}}^k_{2,i} = &R^k_i\dot{R}^k_i\\
                     = &-K_1(R^k_i)\cdot\tanh{(R^k_i)}\cdot|\cos(\psi^k_i)|\\
    \implies {\dot{V}}^k_{2,i} = &\left(-K_1|\cos(\psi^k_i)|\right)(R^k_i)\cdot\tanh{(R^k_i)}\\
    \implies {\dot{V}}^k_{2,i} < &0 \quad \forall \psi^k_i \neq \left\{-\frac{\pi}{2},\frac{\pi}{2}\right\}
    \end{align*}
    Therefore, the proposed control design results in a monotonic decrease of $R^k_i$. Subsequently, as $R^k_i\rightarrow0$, $\tanh(R^k_i)\sim R^k_i$. The following then holds true,
    \begin{align*}
    {\dot{V}}^k_{2,i} = &\left(-K_1|\cos(\psi^k_i)|\right) (R^k_i)^2\\
    \implies {\dot{V}}^k_{2,i} = &-\gamma^k_i(t) V^k_{2,i} \quad;\quad \gamma^k_i(t) = K_1|\cos(\psi^k_i(t))|
\end{align*}
The rate of decay $\gamma^k_i(t)$ of $V^k_{2,i}$ has the following characteristics (from Theorem \ref{thm:Orientation Stabilization})
\begin{itemize}
    \item $\gamma^k_i(t) > 0$ as $\psi^k_i= \left\{-\frac{\pi}{2},\frac{\pi}{2}\right\}$ are not fixed points.
    \item $\gamma^k_i(t) \rightarrow 2K_1$ as $\psi^k_i \rightarrow\{0,\pi\}$, and in finite time.
\end{itemize}
Therefore, the control design ensures asymptotic convergence (position trajectories) as the agent approaches $\mathcal{W}^k_i$. It's also trivial that,
\begin{align*}
    |v_i| \leq K_1 \qquad \because \tanh{(R^k_i)}\leq 1
\end{align*}
Therefore, the control gain $K_1$ is chosen such that $v_i$ does not violate the input constraints of the system.

The Lyapunov condition for stability also guarantees that the sub-level sets of the Lyapunov function are rendered positively invariant during stabilization. 
Mathematically, given a function $f:\mathbb{R}^n \rightarrow \mathbb{R}$, its sub-level set $L$ is defined as follows,
\begin{align*}
    L = \{X\in \mathbb{R}^n : f(X) \leq K \} \quad;\quad K \in \mathbb{R}
\end{align*}
Therefore, the agent's position is constrained within the following set (invariant set) during stabilization to $\mathcal{W}^k_i$.
\begin{align}
    L_i(\mathcal{F}_i^k) := &\left\{R^k_i\in \mathbb{R}^{+} \cup \{0\} : V_2(R^k_i) \leq V_2(R^k_i(0)) \right\}\\
                        \label{eqn:InvRegion}
                    := &\left\{R^k_i\in \mathbb{R}^{+} \cup \{0\} : R^k_i \leq R^k_i(0) \right\}
\end{align}
Here, $R^k_i(0)$ is the radial distance of the $\mathcal{R}_i$ to $\mathcal{W}^k_i$ at the time instant when it begins its navigation towards $\mathcal{W}^k_i$ when $\mathcal{F}_i^k$ is invoked.
From (\ref{eqn:InvRegion}), \textbf{the nature of the sub-level set $L_i(\mathcal{F}^k_i)$ is a disc} centered at $\mathcal{W}^k_i$ with a radius of $R^k_i(0)$ in the $\mathcal{R}_i$'s vicinity. It is also trivial to note that the agent always starts navigating from the boundary of $L_i(\mathcal{F}^k_i)$ (say $\partial L^k_i$) during stabilization. (Since, $\partial L^k_i = \left\{R^k_i\in \mathbb{R}^{+} \cup \{0\} : V^k_{2,i}(R^k_i) = V^k_{2,i}(R^k_i(0)) \right\}$ and by definition, when the agent starts navigating : $R^k_i=R^k_i(0) \implies V_{2,i}^k(R^k)=V_{2,i}^k(R^k_i(0))$ always holds.)
\end{proof}
Since the individual states of the system - $(\mathcal{R}_i^k,\mathcal{\psi}_i^k)$ themselves exhibit Lyapunov stability through the Lyapunov functions $(V^k_{1,i},V^k_{2,i})$, the overall system is also stable in the sense of Lyapunov $(V^k_{1,i}+V^k_{2,i})$.
\begin{align*}
    \dot{V}_{1,i}^k < 0, \dot{V}_{2,i}^k < 0 \implies (\dot{V}_{1,i}^k+, \dot{V}_{2,i}^k) < 0 
\end{align*}

\section{THE PROPOSED SELF-NAVIGATOR}
\label{sec:3}

\begin{table}[h]
    \centering
    \caption{Notations in Section \ref{sec:3}}
    \label{tab:notations}
    \begin{tabular}{|c|p{5cm}|}
        \hline
        \textbf{Notation} & \textbf{Description} \\
        \hline\hline
        $\mathcal{R}_i$ & The $i^{\text{th}}$ agent\\
        \hline
        $\partial \mathcal{R}_i$ & Boundary of the shape of $i^{\text{th}}$ agent\\
        \hline
        $\vec{v}_{j,i}$ & Relative velocity of $\mathcal{R}_j$ relative to $\mathcal{R}_i$.\\
        \hline
        $\mathcal{M}_i(\{\rho_m,\theta_m,\vec{v}_m\})$& Measurement tuples sensed onboard $\mathcal{R}_i$. Each tuple corresponds to the $m^{\text{th}}$ measurement, and consists of the information about range $(\rho_m)$, the angle $(\theta_m)$ at which it was obtained, and its relative velocity $(\vec{v}_m)$.\\
        \hline 
        $d_{\vec{v}}(P,L)$ & Distance between a geometric shape $L$ and an external point P along the direction vector $\vec{v}$\\
        \hline
        $d_{\min}(P,L)$ & Shortest distance between geometric shape $L$ and an external point P\\
        \hline 
        $N$ & Total number of measurements\\
        \hline\hline
    \end{tabular}
\end{table}

As $\mathcal{R}_i$ navigates amidst the vicinity of other agents, at each planning instance, a correlation is obtained between the sensed range data points - $\{P_m(\rho_m,\theta_m)\}$ and the corresponding moving agents ($\mathcal{R}_j$) in its vicinity. The correlated range data points are then assigned a velocity ($\vec{v}_m$) as follows,
\begin{align*}
    \vec{v}_m = &\begin{cases}
    \vec{v}_{j,i}, &\text{if }P_m(\rho_m,\theta_m) \in \partial \mathcal{R}_j\\
    \vec{0}, &\text{otherwise} 
    \end{cases}
\end{align*}
The bounds of safety are then characterized through the constructed set of measurement tuples - $\mathcal{M}_i\{(\rho_m,\theta_m,\vec{v}_m)\}$.

\subsection{Identifying the set of candidate $L_i(\mathcal{F}_i^k)$ around $\mathcal{R}_i$}\label{sec:3A}



As the geometric structure of $L_i(\mathcal{F}_i^k)$ is a disc, each candidate $L_i(\mathcal{F}_i^k)$ can be uniquely parameterized through its center and radius - $(\mathcal{W},d)$. 

Since $\mathcal{R}_i$ starts navigating from the boundary $\partial L_i(\mathcal{F}_i^k)$ of $L_i(\mathcal{F}_i^k)$ (Theorem \ref{thm:Position Stabilization}), the parametric representation of a candidate $L_i(\mathcal{F}_i^k)$ is sought with $\mathcal{R}_i \in \partial L_i(\mathcal{F}_i^k)$. Now, to characterize the set of candidate $L_i(\mathcal{F}_i^k)$ in the ego-centric frame of the $\mathcal{R}_i$, the angular space $(\theta_n)$ is discretized in coherence with the angles $\theta_m$ at which $\rho_m$ are measured. Along a candidate direction $\theta_n$, the center $\mathcal{W}$ of any candidate $L_i(\mathcal{F}_i^k)$ whose radius is $'d'$ is at a distance $'d'$ units along $\theta_n$.

The maximum distance at which $\mathcal{W}$ could be selected for describing the feedback controller $\mathcal{F}^k_i$ for safe navigation (associated $L_i(\mathcal{F}^k_i)$ will be free of encroachment by the neighboring agents) is computed by describing the function $\mathcal{D}^k_i:\theta_n\rightarrow\mathbb{R}$ as follows $(\forall \theta_n \in \{\theta_m\})$,
\begin{align}
    \mathcal{D}^k_i(\theta_n) =\;&\underset{d}{\text{argmax}} \quad d \label{eqn:Opti_Def}\\
    \text{subject to} \quad&\mathcal{C}_m, \quad \forall m \in \{1,2,...,N\}
\end{align}
The constraints $\mathcal{C}_m$ are described as follows,
\begin{align}
    \mathcal{C}_m=\begin{cases}
    d_{\vec{v}_m}(P_m,L_i^k) > \frac{|\vec{v}_m|}{f_p}, &\text{if } \vec{v}_m \neq \vec{0}\\
    d_{\min}(P_m,L_i^k) > 0, &$\text{otherwise}$
    \end{cases}
    \label{eqn:velocity_consts}
\end{align}
\begin{remark}
Since the geometric structure of $L^k_i$ is well-defined, both $d_{\vec{v}_m}(P_m,L_i^k),d_{\min}(P_m,L_i^k))$ exhibit closed-form solutions as functions of $(d,\theta_n)$. Also, as $L^k_i$ is a disc, $d_{\vec{v}_m}(P_m,L_i^k) = d_{\min}(P_m,L_i^k))$ holds true when the direction of $\vec{v}_m$ coincides with the direction of the ray emanating from $P_m(\rho_m,\theta_m)$ and terminating at $\mathcal{W}$ (geometric center of $L^k_i$).
\end{remark}


\begin{figure}
    \centering
    \includegraphics[width = \columnwidth]{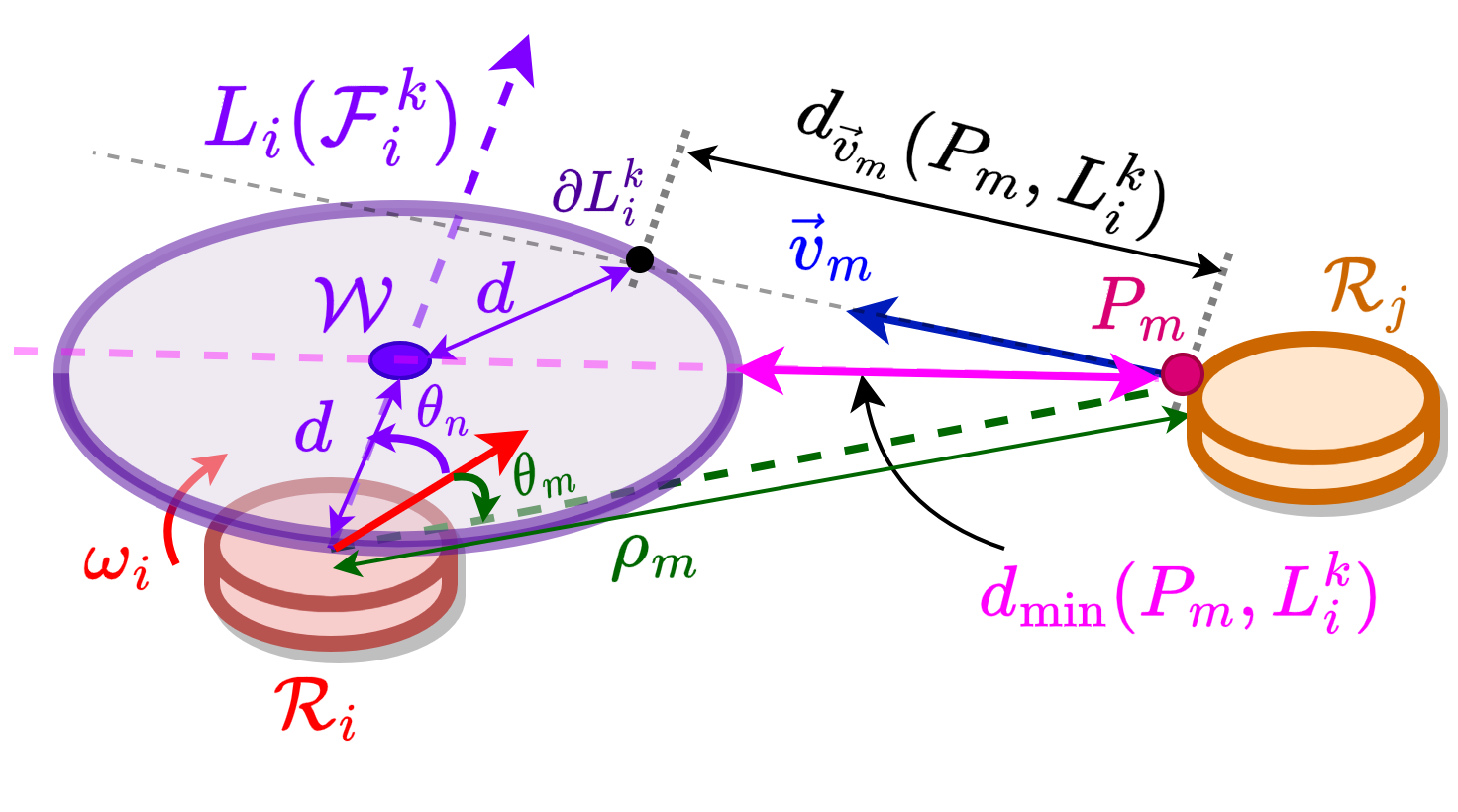}
    \caption{Describing the invariant set ${L}^k_i$ associated with $\mathcal{F}^k_i$ within the bounds of safety perceived during runtime.}
    \label{fig:Gen_Engage_Scenario}
\end{figure}

To compute safe control inputs at the $k^{\text{th}}$ planning instance (say, at time $t=t_k$), the constraints in \eqref{eqn:velocity_consts} ensure that $\mathcal{M}_i$ do not encroach into any candidate $L_i(\mathcal{F}_i^k)$ until the subsequent $(k+1)^{\text{th}}$ planning instance. 

By considering a constant velocity model when $\vec{v}_m\neq0$ between consecutive planning instances, the constraint \eqref{eqn:velocity_consts} enforces that the relative displacements of $P_m$ are strictly less than $d_{\vec{v}_m}(P_m,L^k_i)$ to prevent encroachment into $L_i(\mathcal{F}_i^k)$. When $P_m$ is relatively stationary ($\vec{v}_m=0$), the shortest between $P_m$ and $L_i^k$ is considered to prevent encroachment.  

Since $L_i(\mathcal{F}_i^k)$ is the invariant set (convex) within which the naturally induced system trajectories evolve when the control inputs computed from associated $\mathcal{F}_i^k$ are deployed, the constraint in \eqref{eqn:velocity_consts} essentially guarantees that the following holds true between consecutive planning instances.
\begin{align*}
     P_m(\tau)\cap L_i(\mathcal{F}_i^k) = \emptyset, \; \forall \tau \in \left[t_k,t_k+\left(\frac{1}{f_p}\right)\right], \forall P_m\in\partial\mathcal{R}_j
\end{align*} 

\begin{remark}
    In a generic operating scenario, the precise planning instances at which every agent computes control inputs for itself for safe navigation are not synchronized. The agents are also oblivious to the exact target navigation points of their counterparts. Consequently, although the control strategy is identical for every agent in the crowd here, it is impossible for a given agent (say, $\mathcal{R}_i$) to know precisely how $\vec{v}_{j, i}$ of its neighbors is going to evolve between consecutive planning instances. This motivates the constant velocity consideration to identify candidate $L_i(\mathcal{F}_i^k)$ for computing safe control inputs for navigation between consecutive planning instances. Further, when the time interval between consecutive planning instances is small, this consideration is not detrimental for practical purposes. 
\end{remark}

\subsection{The Planning Strategy}\label{sec:3B}

The function $\mathcal{D}^k_i$ is computed at each planning instance based on $\mathcal{M}_i$. A visualization of $\mathcal{D}_i^k$ along with a candidate ${L}_i(\mathcal{F}_i^k)$ (centered at $\mathcal{W}$) is illustrated in Fig.~\ref{fig:planning_strategy}. The red dots represent the entries of $\mathcal{D}_i^k$ computed using \eqref{eqn:Opti_Def} in section \ref{sec:3A}.

Now, given the target point of navigation $\mathcal{T}_i$ for $\mathcal{R}_i$, the target point $\mathcal{W}^k_i$ at each planning instance for describing the feedback controller $\mathcal{F}^k_i$ is selected through the following strategy. The feedback control inputs in \eqref{eqn:ControlDesign} for safe navigation are computed through states described relative to $\mathcal{W}_k^i$ for $\mathcal{F}^k$. 
 \begin{align}
     \mathcal{W}^k_i = &\underset{\mathcal{W}}{\text{argmin}} \quad ||\mathcal{W} - \mathcal{T}_i|| \label{eqn:waypoint_strategy_eqn}\\
     \text{subject to}\quad &\mathcal{W} \in \{(d,\theta_n)\;:\;d \leq \mathcal{D}^k_i(\theta_n) \} \label{eqn:waypoint_strategy_consts}
 \end{align}

At each planning instance, the proposed strategy in \eqref{eqn:waypoint_strategy_eqn} greedily seeks $\mathcal{W}^k_i$ such that it is as close as $\mathcal{T}_i$ as possible within the constrained set described by $\mathcal{D}^k_i$. Since $\mathcal{F}^k_i$ is designed such that it induces trajectories towards $\mathcal{W}^k_i$, the constrained set in \eqref{eqn:waypoint_strategy_consts} also iteratively grows closer towards $\mathcal{T}_i$ between consecutive planning instances. As the objective function defined in \eqref{eqn:waypoint_strategy_eqn} has its minima at $\mathcal{W} = \mathcal{T}_i$, once this constrained set grows sufficiently close to $\mathcal{T}_i$ such that it encompasses it at an arbitrary planning instance, the proposed planning strategy achieves a latch of $\mathcal{W}^k_i$ onto $\mathcal{T}_i$. Consequently, the resulting trajectories induced by $\mathcal{F}^k_i$ safely converge on $\mathcal{T}_i$ once $\mathcal{W}^k_i = \mathcal{T}_i$ is achieved. 

\begin{remark}
    The choice of the proposed strategy in \eqref{eqn:waypoint_strategy_eqn} under the constraints in \eqref{eqn:waypoint_strategy_consts} is not restrictive, and other alternative intelligent formulations could be explored as long as they could iteratively induce convergence of $\mathcal{W}^k_i$ to $\mathcal{T}_i$.
\end{remark}

\begin{figure}
    \centering
    \includegraphics[width = 0.8\columnwidth]{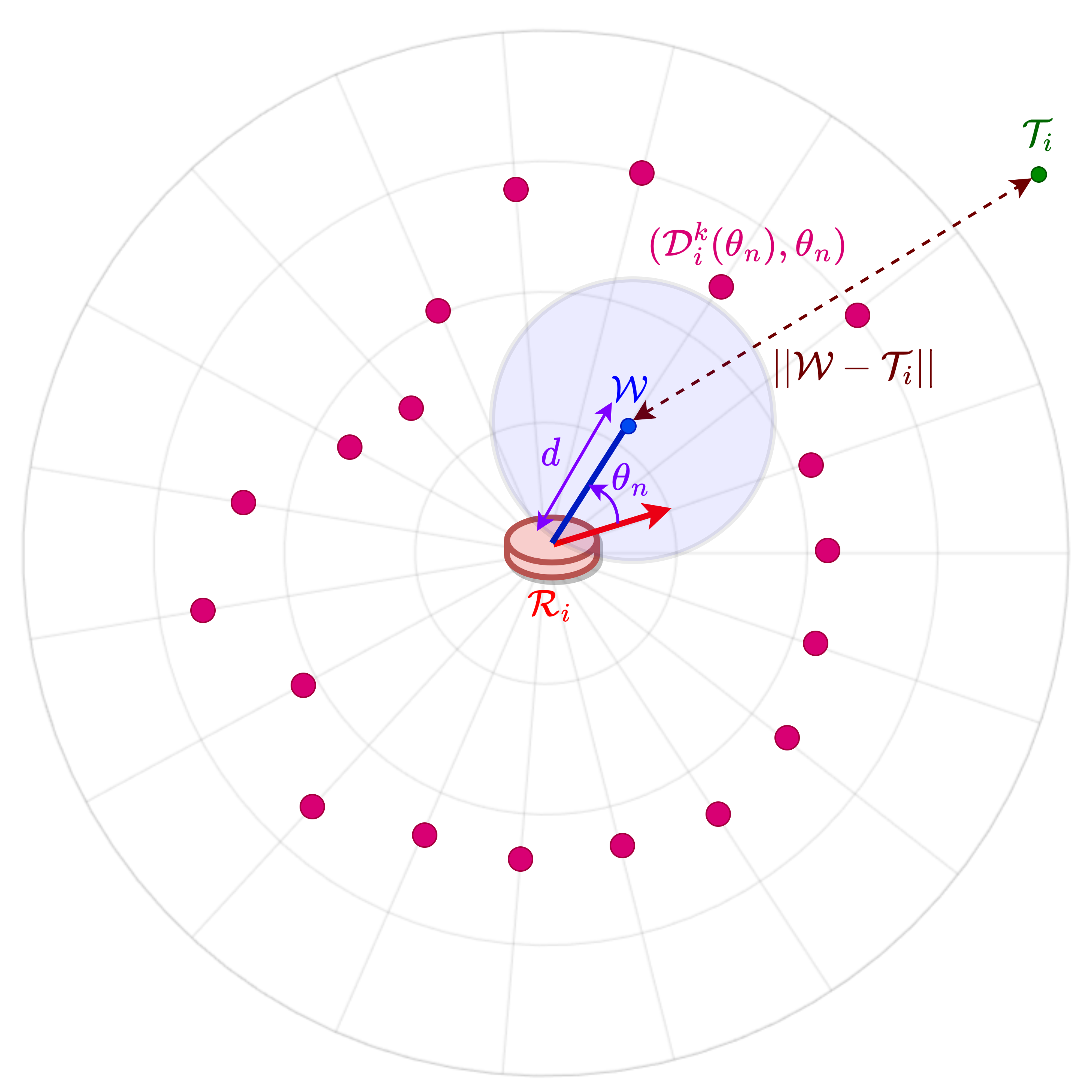}
    \caption{The Planning Strategy. A visualization of $\mathcal{D}^k_i$ at each planning instance in the ego-centric frame of the $\mathcal{R}_i$.}
    \label{fig:planning_strategy}
\end{figure}
 
\subsection{The self-navigation algorithm}\label{sec:3C}
The proposed self-navigation algorithm for each agent $\mathcal{R}_i$ is described in Algorithm \ref{alg:Proposed algorithm}. Given the target $\mathcal{T}_i$ for $\mathcal{R}_i$, and the bandwidth limitations of its onboard computer $f_p$, the proposed algorithm (Algorithm \ref{alg:Proposed algorithm}) computes feedback control inputs through a pre-synthesized feedback controller for safe navigation by exploiting the geometric structure of its associated invariant set $L_i(\mathcal{F}^k_i)$, and $\mathcal{M}_i$ sensed \& constructed during run-time. 

\begin{algorithm}
 \DontPrintSemicolon
 \KwInput{$\mathcal{T}_i,f_p,,L_i(\mathcal{F}^k_i)$}
 \KwOutput{$\mathcal{F}^k_i(R^k_i,\psi^k_i;\mathcal{W}^k_i)$}
 {
 \While{$\mathcal{R}_i\neq\mathcal{T}_i$}{
 \tcc{Loop runs at $f_p$ Hz}
\textbf{sense \& construct} $\rightarrow\{\mathcal{M}_i(\rho_m,\theta_m,\vec{v}_{m})\}$\;
\For{\textbf{each} $\theta_n$}{
 \textbf{compute} $\mathcal{D}^k_i\left(\theta_n\right)$ via (\ref{eqn:Opti_Def}) using $\left\{(\rho_m,\theta_m,\vec{v}_m)\right\}$, $L_i(\mathcal{F}^k_i)$\;
 }
 \textbf{compute} $\mathcal{W}^k_i$ via strategy in (\ref{eqn:waypoint_strategy_eqn}) using $\mathcal{D}^k_i, \mathcal{T}_i$\;
\textbf{compute} $(R^k_i,\psi^k_i)$ relative to $\mathcal{W}^k_i$\;
\textbf{compute} $\mathcal{F}^k_i(R^k_i,\psi^k_i;\mathcal{W}^k_i)$ from (\ref{eqn:ControlDesign})\;
\textbf{deploy} $\mathcal{F}^k_i(R^k_i,\psi^k_i;\mathcal{W}^k_i)$ on $\mathcal{R}_i$\;
$k\gets k+1$
}
}
\caption{Safe navigation via invariant sets induced by feedback control - $\mathcal{R}_i$}
\label{alg:Proposed algorithm}
\end{algorithm}
In line 2, the tuple consisting of the local range information is defined through the onboard sensing elements and perception algorithms. Once this tuple is available, the function $\mathcal{D}^k_i$ is constructed in lines 3-4. $\mathcal{D}^k_i$ characterizes  the region in the ego-centric frame of $\mathcal{R}_i$ for identifying $\mathcal{W}^k_i$ which describes $\mathcal{F}^k_i$. Line 5 ensures that the selection of $\mathcal{W}^k_i$ is streamlined through the proposed strategy for inducing convergence onto $\mathcal{T}_i$. Once the description of the $\mathcal{F}^k_i$ is complete through $\mathcal{W}^k_i$, the feedback control inputs are then computed via $\mathcal{F}^k_i$ and deployed on the $\mathcal{R}_i$ in lines 6-8.

\section{RESULTS \& ANALYSIS}
\label{sec:4}

\subsection{Hardware Experiments}
\label{sec:4C}

\begin{figure}
    \centering
    \includegraphics[width = \columnwidth]{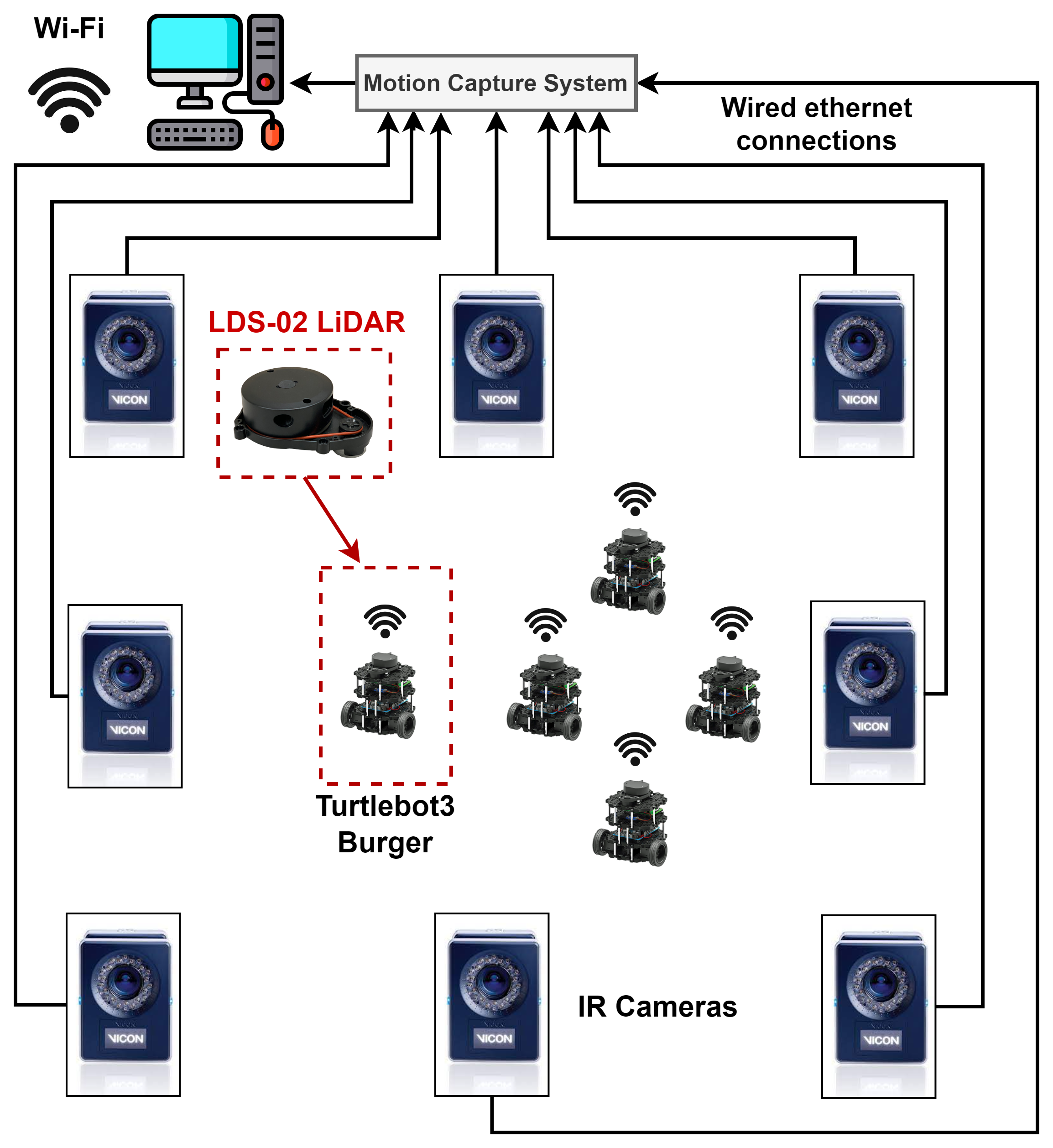}
    \caption{The experimental setup - Schematic}
    \label{fig:ExperimentSchematic}
\end{figure}

\begin{figure*}
    \centering
    \begin{subfigure}[b]{0.66\columnwidth}
    \includegraphics[width = \columnwidth]{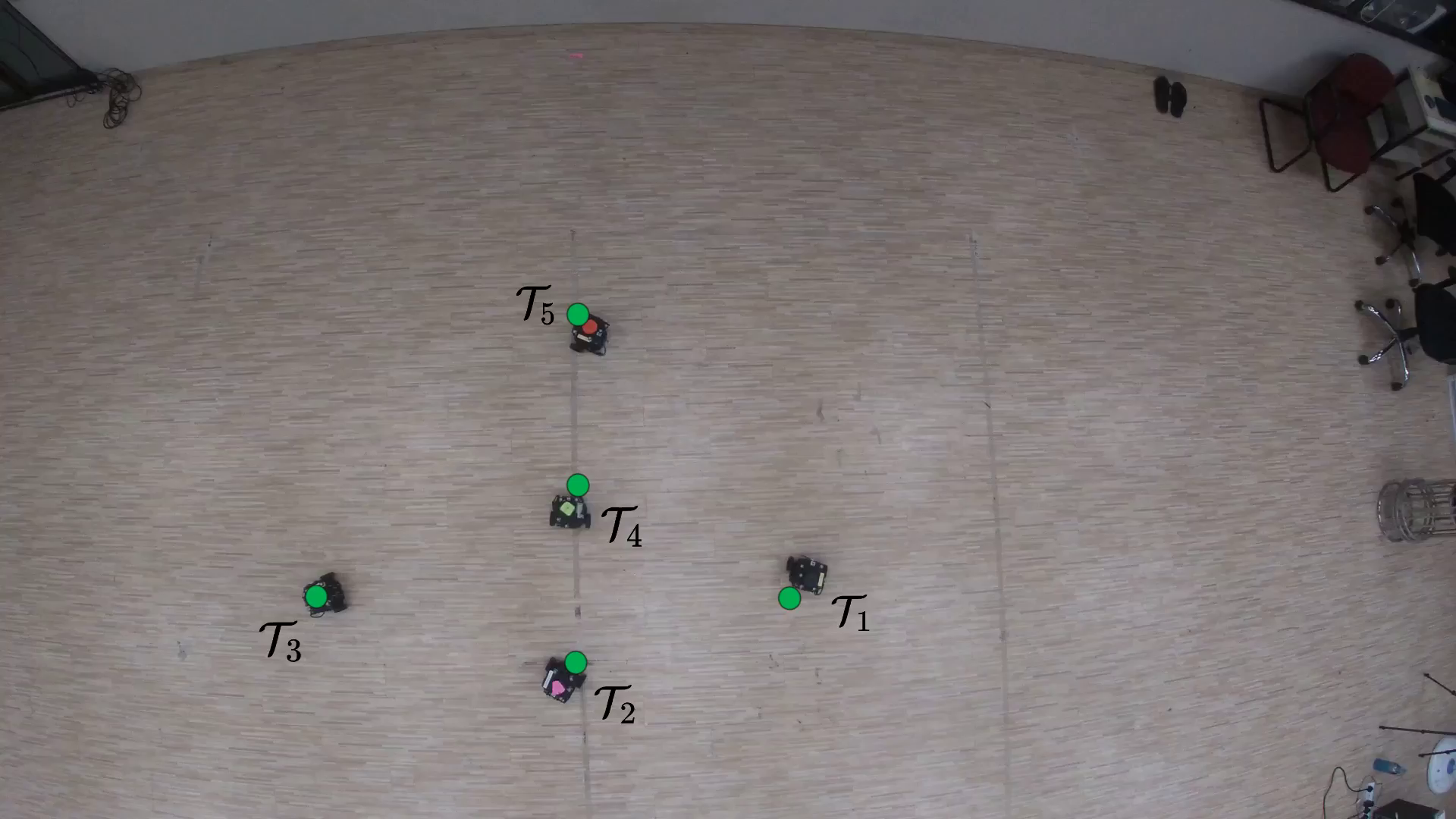}
    \caption{\textbf{T = 0 seconds}. The initial positions of the agents in the experimental setup. The green markers are reshuffled $\mathcal{T}_i$ for navigation.}
    \label{fig:00_00}
    \end{subfigure}
        \begin{subfigure}[b]{0.66\columnwidth}
    \includegraphics[width = \columnwidth]{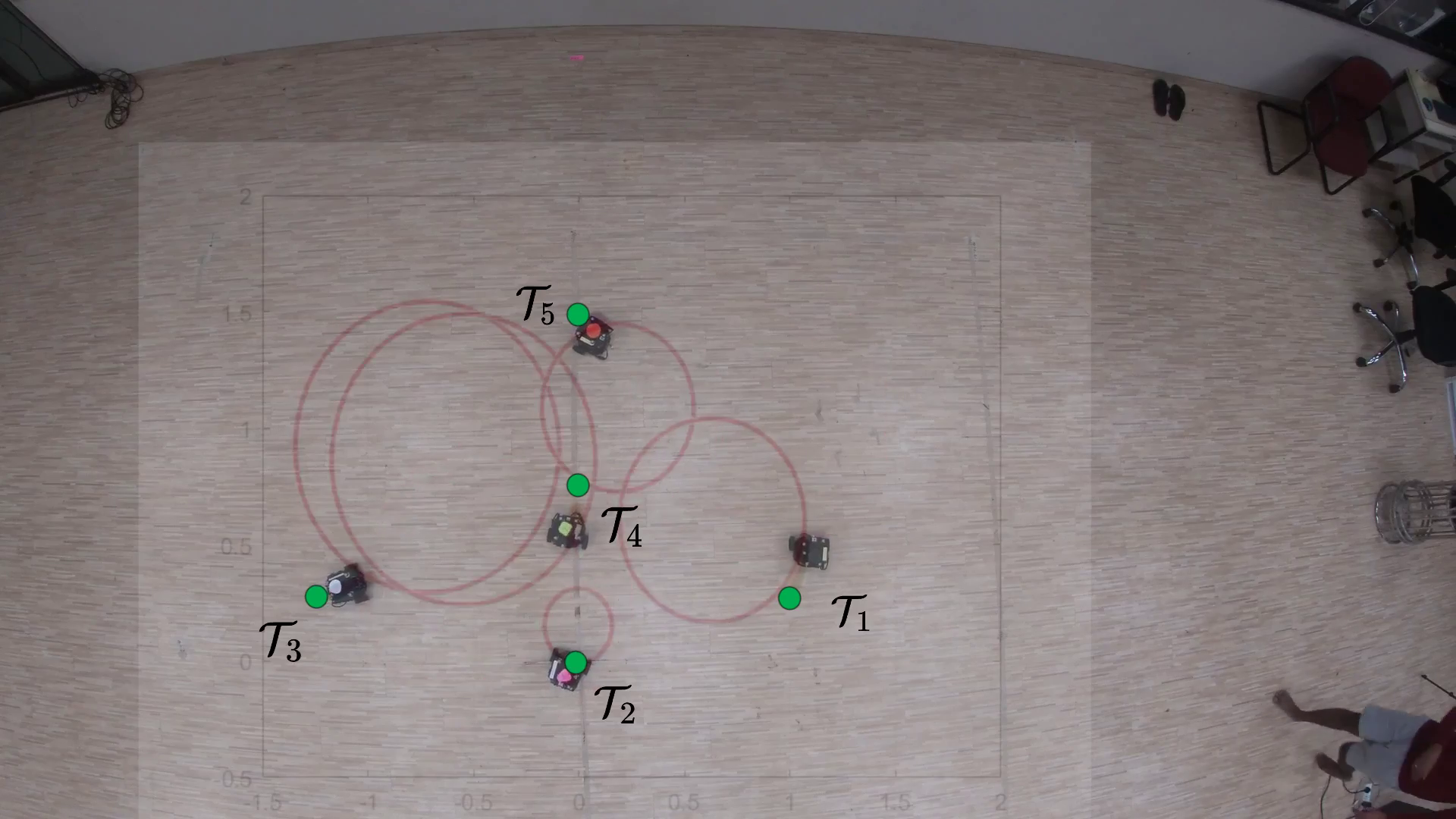}
    \caption{\textbf{T = 2 seconds}. agents identify ${L}_i(\mathcal{F}_i^k)$ \textbf{(red circles)} to compute their safe control commands from the associated onboard $\mathcal{F}_i^k$.}
    \label{fig:00_02}
    \end{subfigure}
        \begin{subfigure}[b]{0.66\columnwidth}
    \includegraphics[width = \columnwidth]{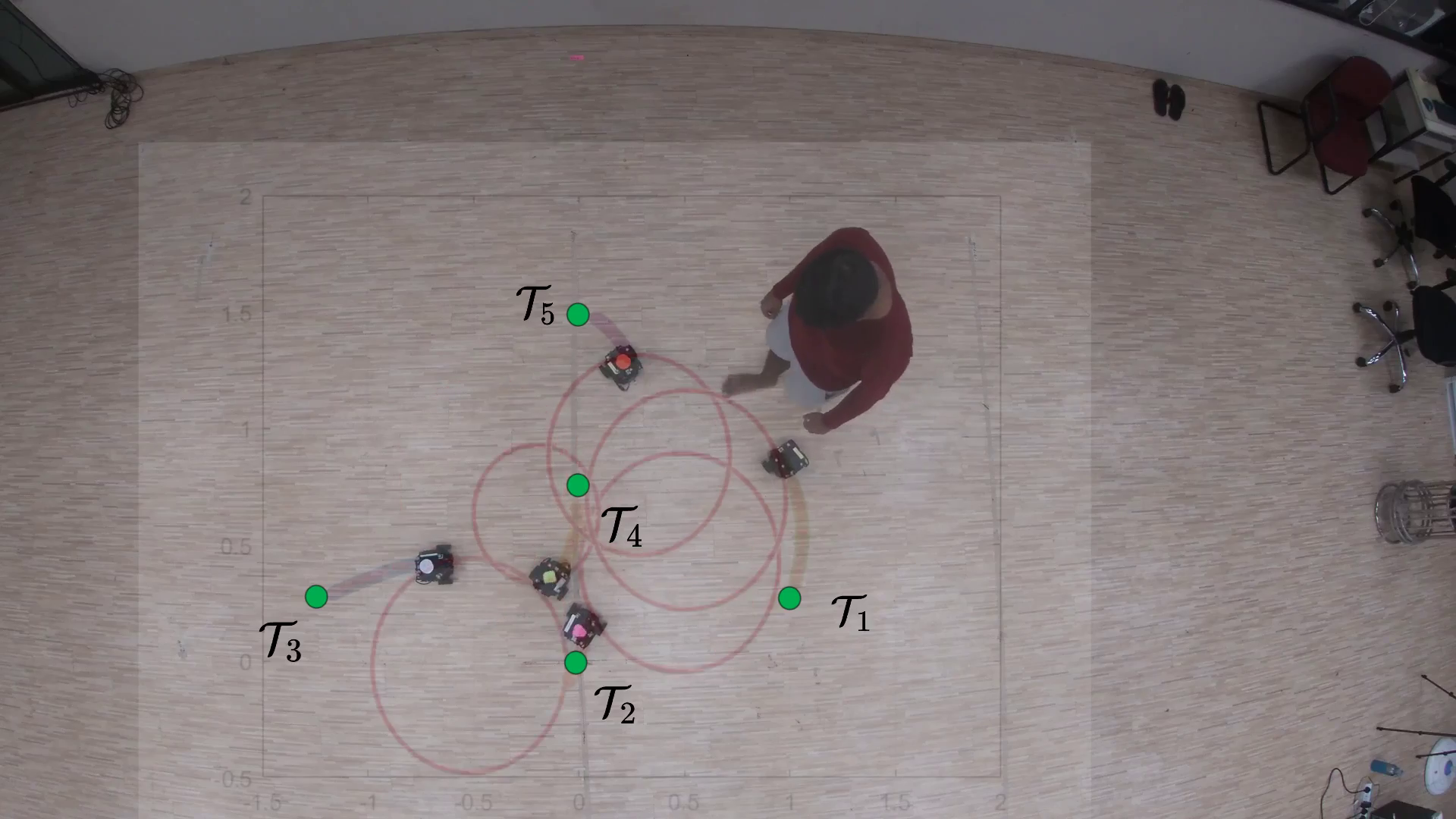}
    \caption{\textbf{T = 6 seconds}. A human influence affects the selected ${L}_i(\mathcal{F}_i^k)$ for safe control command computations for navigation.}
    \label{fig:00_06}
    \end{subfigure}
    \begin{subfigure}[b]{0.66\columnwidth}
    \includegraphics[width = \columnwidth]{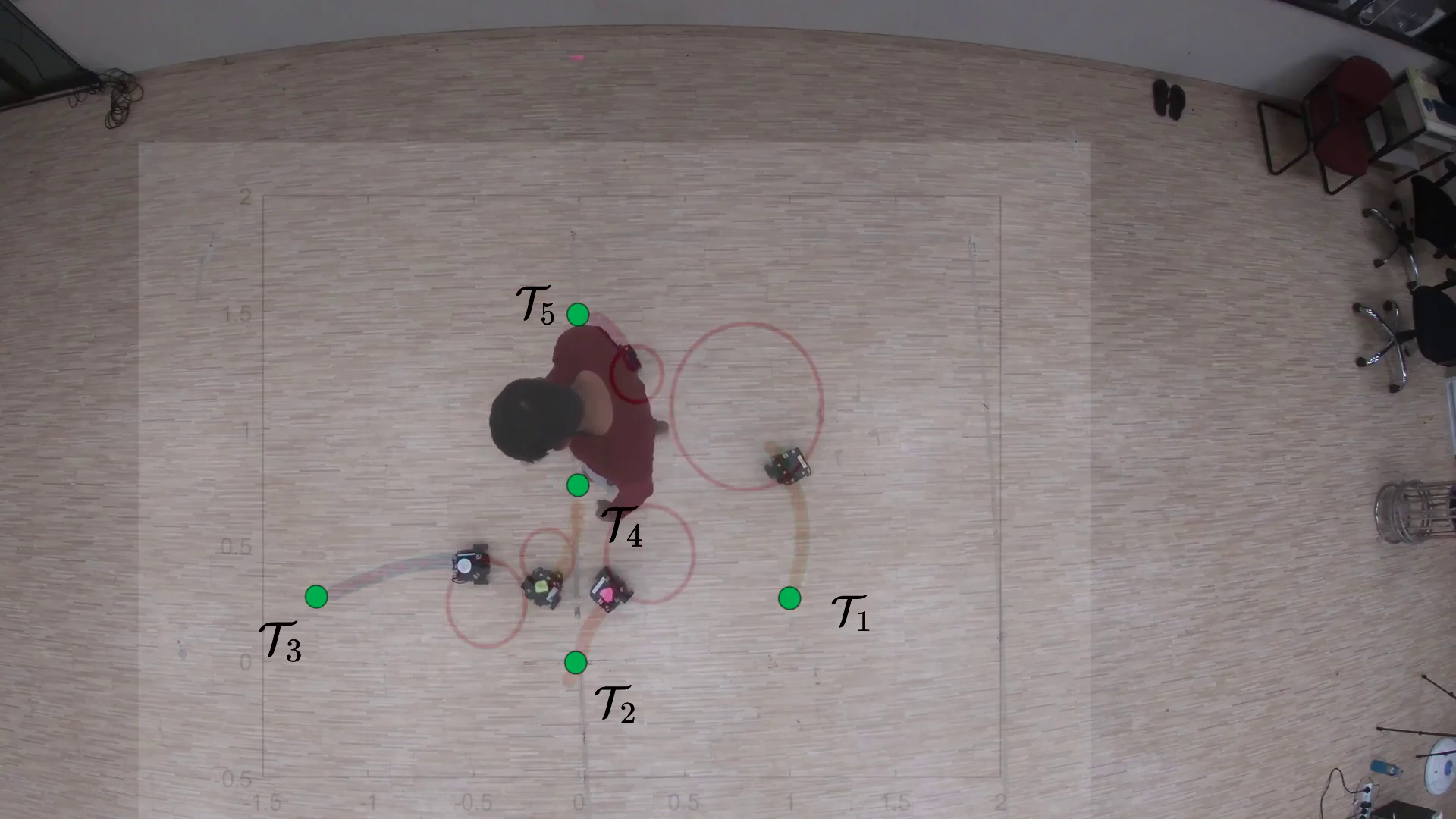}
    \caption{\textbf{T = 8 seconds}. ${L}_i(\mathcal{F}_i^k)$ of all agents shrink to avoid human encroachment (collision avoidance) within the workspace.}
    \label{fig:00_08}
    \end{subfigure}
    \begin{subfigure}[b]{0.66\columnwidth}
    \includegraphics[width = \columnwidth]{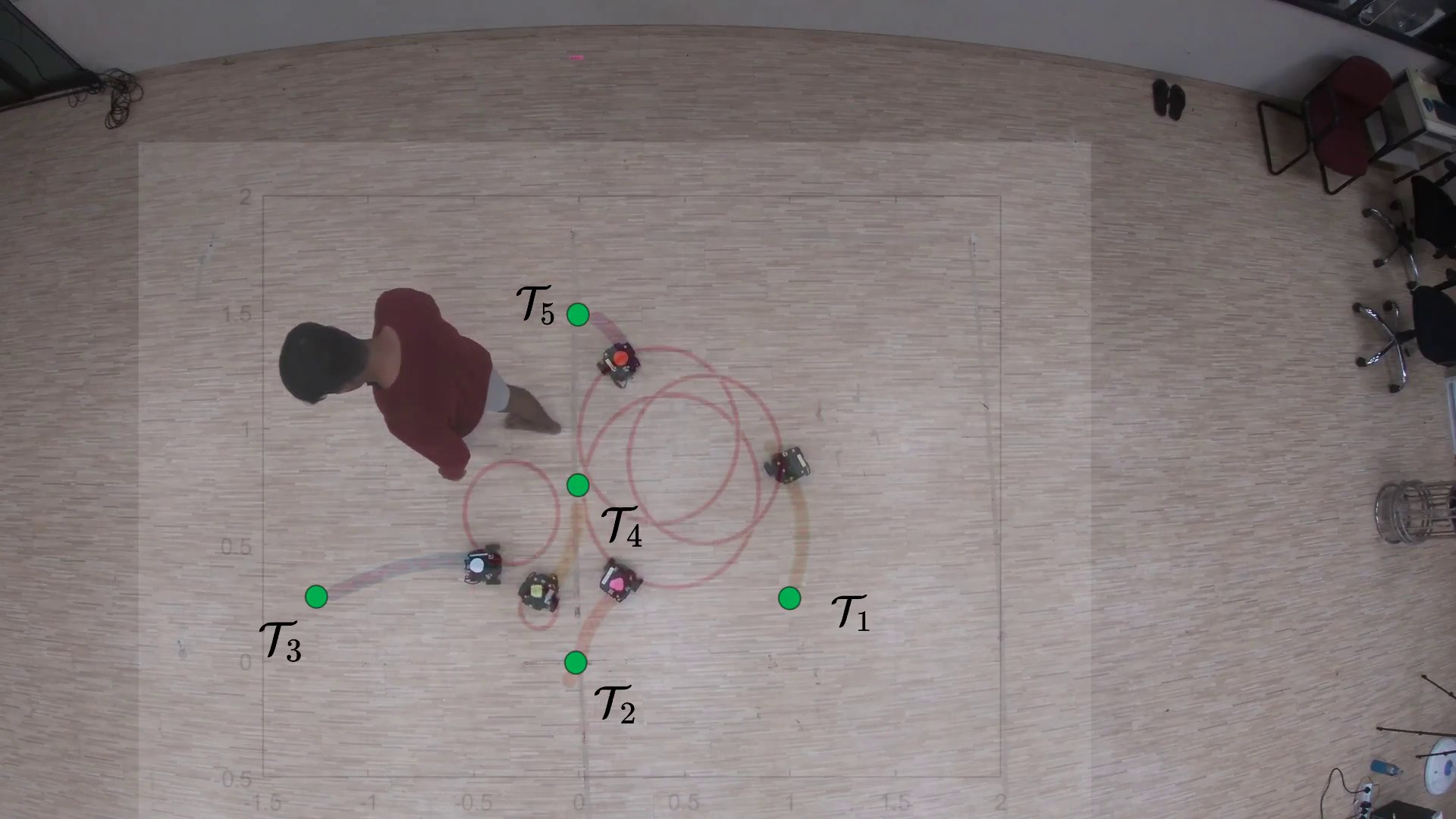}
    \caption{\textbf{T = 10 seconds}. ${L}_i(\mathcal{F}_i^k)$ of all agents grow towards their respective $\mathcal{T}_i$ as the human moves away from the workspace.}
    \label{fig:00_10}
    \end{subfigure}
    \begin{subfigure}[b]{0.66\columnwidth}
    \includegraphics[width = \columnwidth]{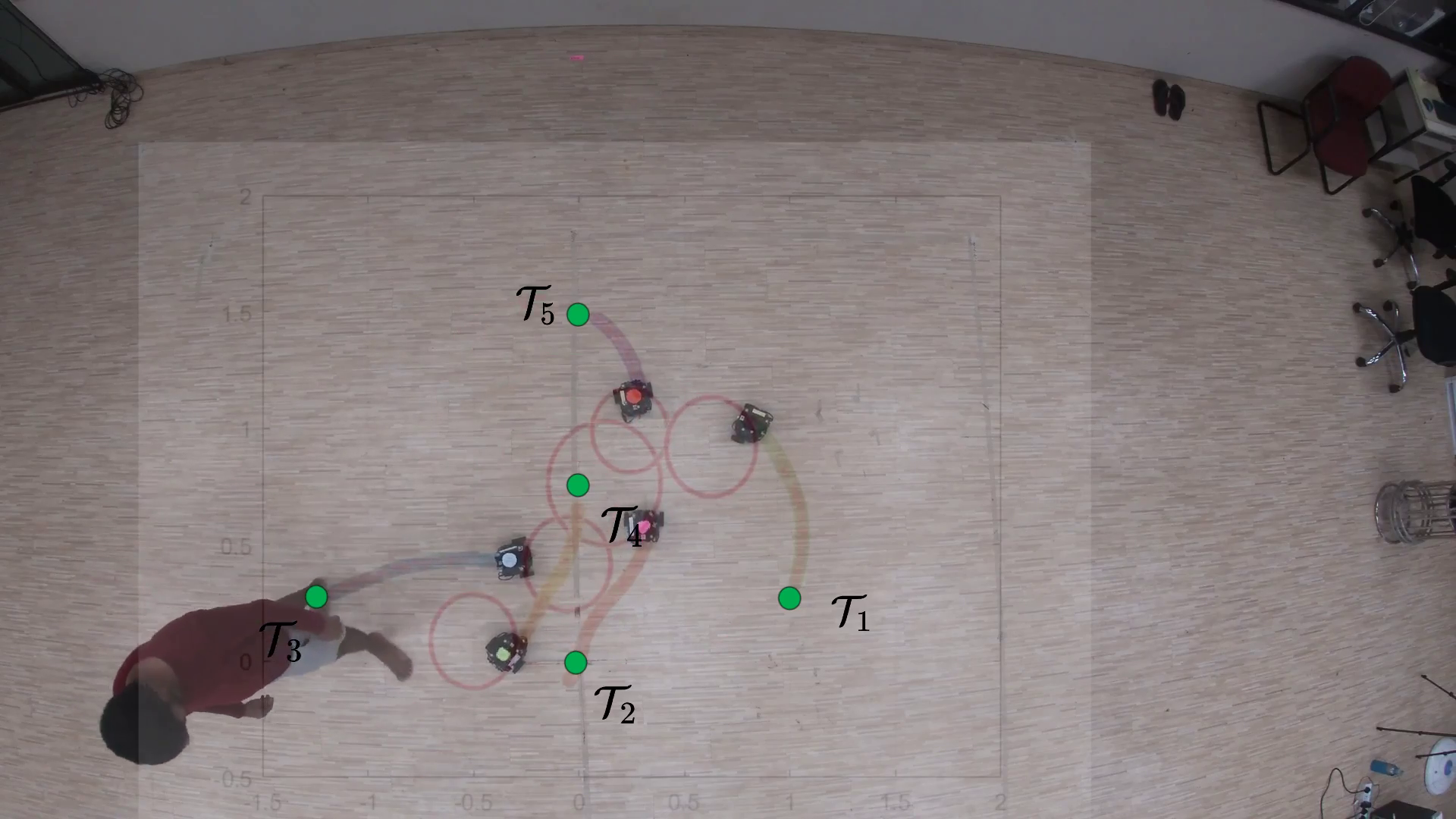}
    \caption{\textbf{T = 13 seconds}. ${L}_i(\mathcal{F}_i^k)$ of $\mathcal{R}_i$ (bright green marker) shrinks (collision avoidance) while other agents move toward their $\mathcal{T}_i$.}
    \label{fig:00_13}
    \end{subfigure}
    \begin{subfigure}[b]{0.66\columnwidth}
    \includegraphics[width = \columnwidth]{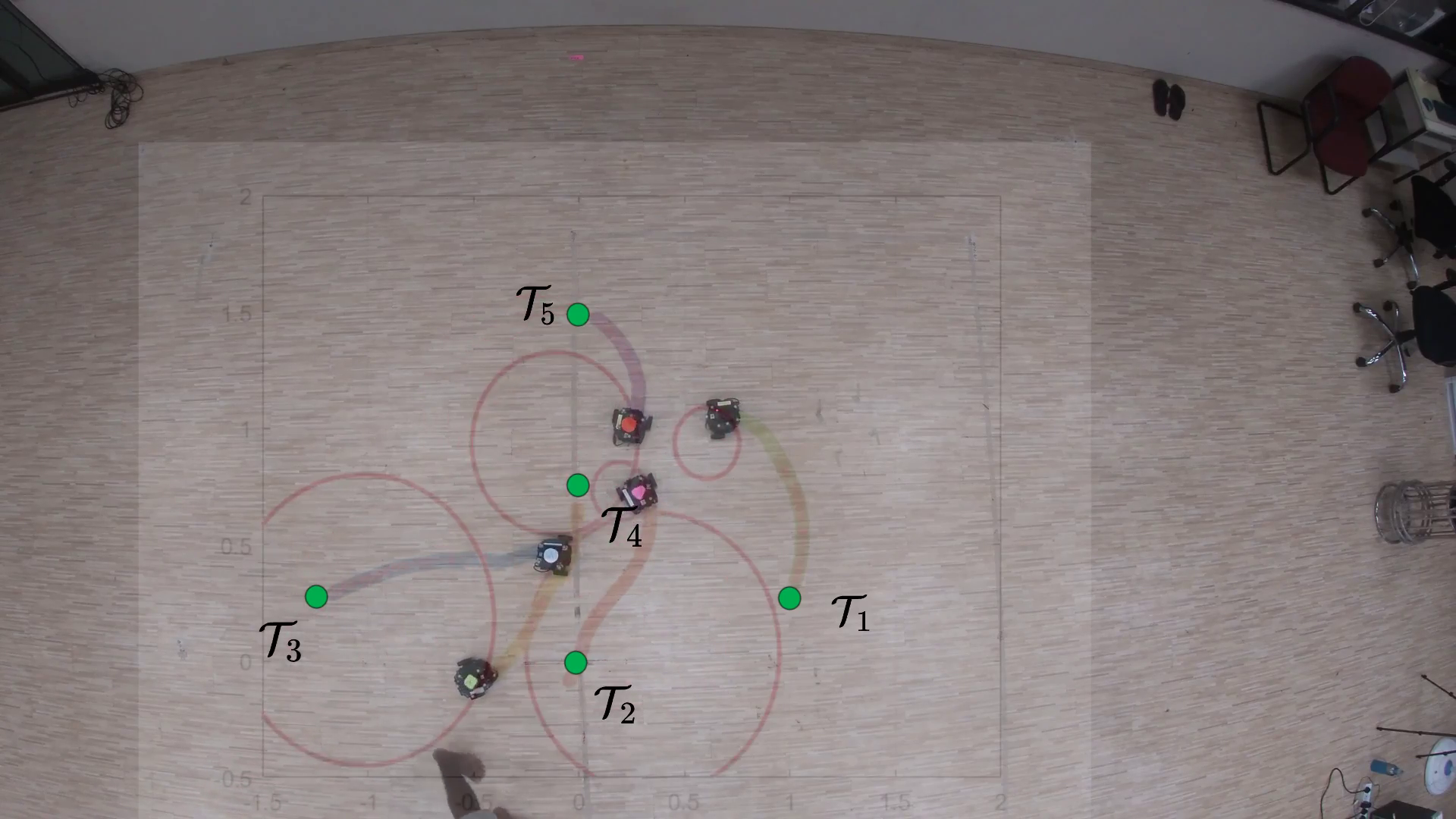}
    \caption{\textbf{T = 16 seconds}. As the human leaves, the ${L}_i(\mathcal{F}_i^k)$ for every $\mathcal{R}_i$ approach their respective $\mathcal{T}_i$ within the sensed bounds of safety.}
    \label{fig:00_16}
    \end{subfigure}
    \begin{subfigure}[b]{0.66\columnwidth}
    \includegraphics[width = \columnwidth]{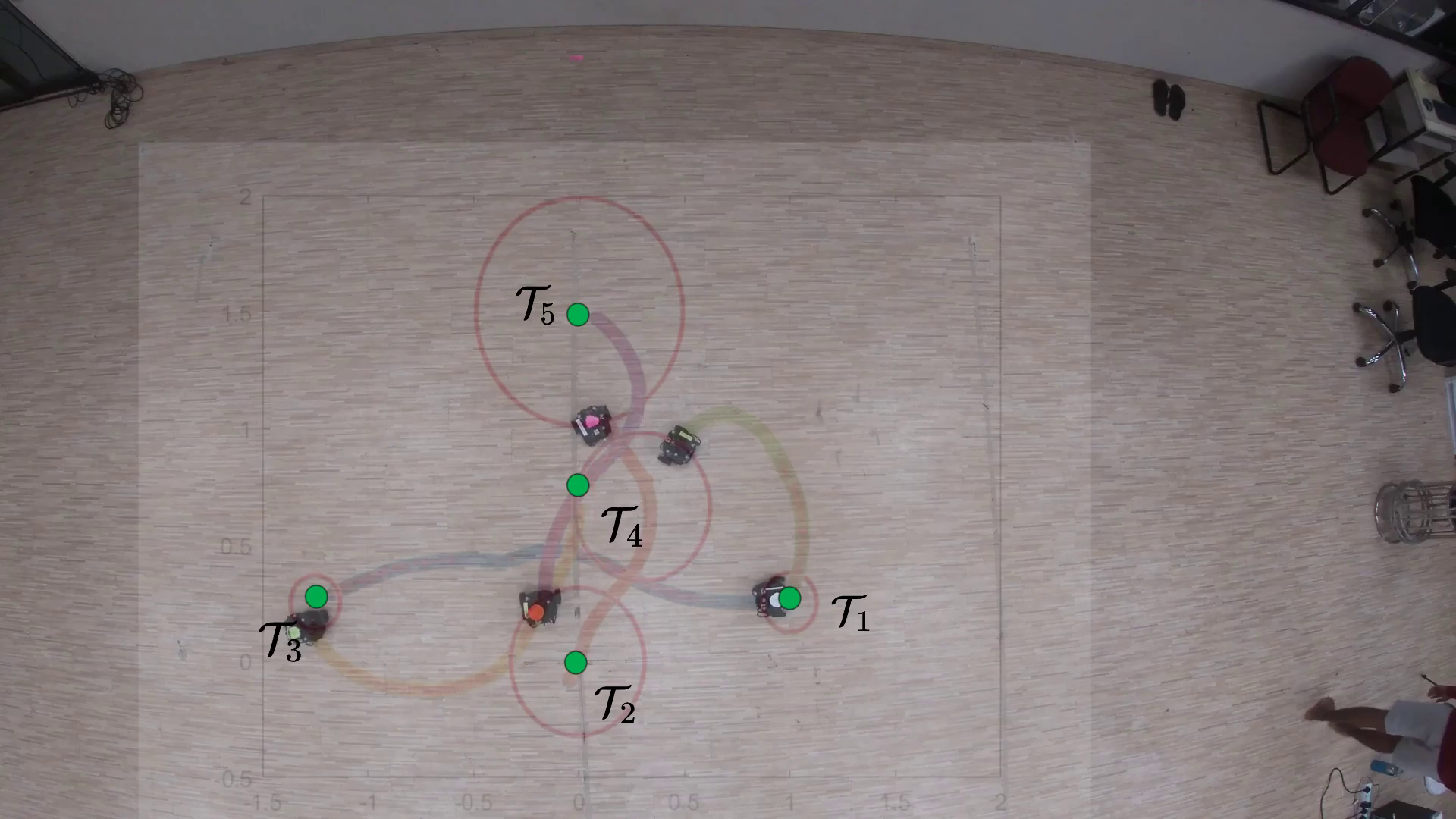}
    \caption{\textbf{T = 27 seconds.} ${L}_i(\mathcal{F}_i^k)$ have latched onto the $\mathcal{T}_i$ for 4 agents (pink, orange are approaching and bright green, white complete)}
    \label{fig:00_27}
    \end{subfigure}
    \begin{subfigure}[b]{0.66\columnwidth}
    \includegraphics[width = \columnwidth]{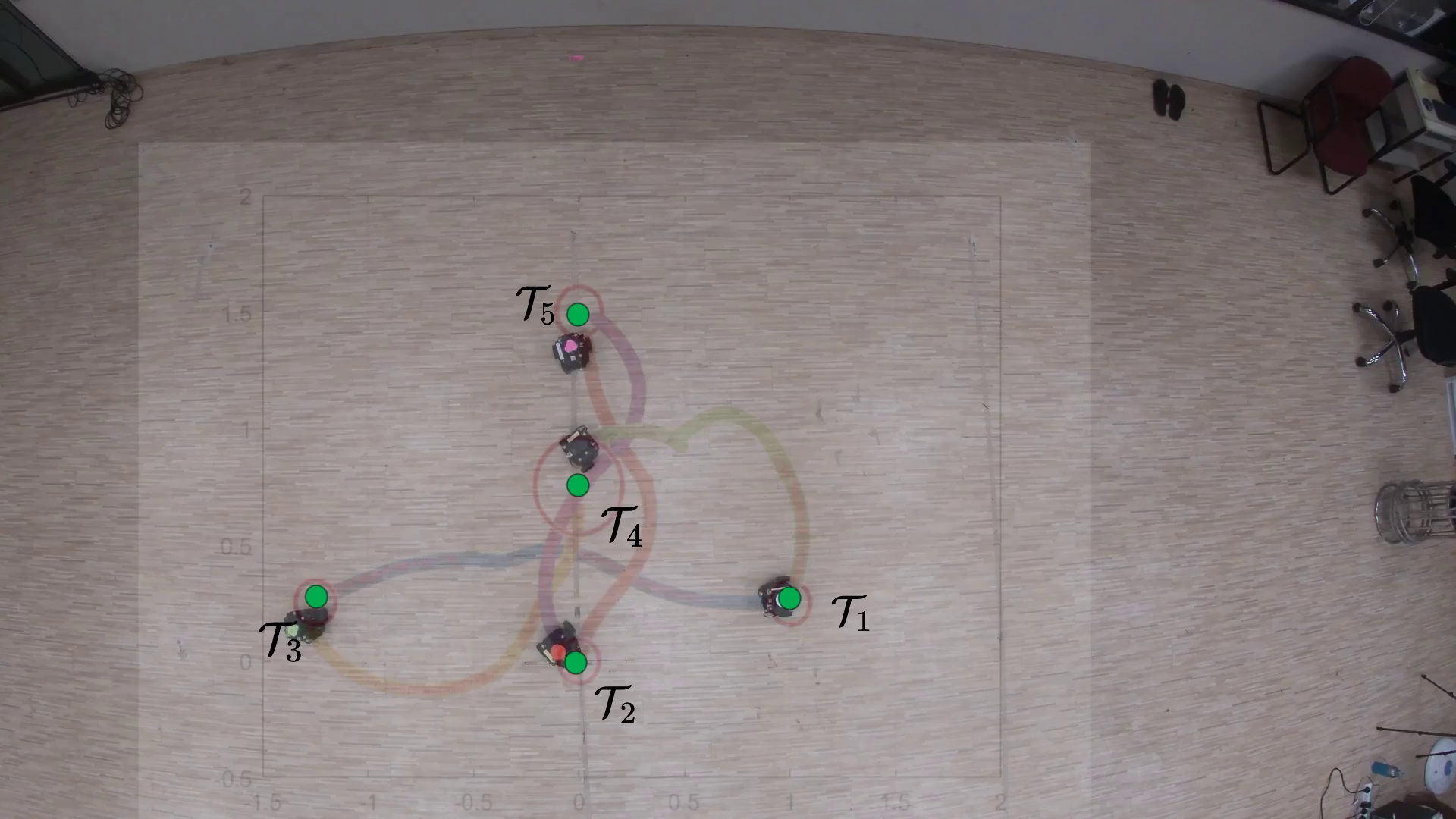}
    \caption{\textbf{T = 33 seconds}. All the agents have accomplished their navigation objective. The line traces represent the overall path induced.}
    \label{fig:00_33}
    \end{subfigure}
    \caption{Experiments to demonstrate the self-navigator deployed on Turtlebot3 robots}
    \label{fig:Turtlebot3Experiments}
\end{figure*}

The proposed self-navigation algorithm for multi-agent navigation is tested and validated in a human-influenced environment consisting of multiple Turtlebot3 agentic platforms (Burger variant). Turtlebot3 is a cylindrical structured differential drive robot with a diameter of 105 mm, and runs on a Raspberry Pi 4 Model B on-board computer. It is equipped with a $360^{\circ}$ LDS-02 LiDAR for sensing the bounds of safety.

The schematic representing the experimental setup at ARMS Lab, IIT Bombay, is illustrated in Fig.~\ref{fig:ExperimentSchematic}. It consists of ViCON motion capture system and 5 Turtlebot3 (with infrared (IR) markers) agents placed at arbitrary initial positions in space. The visual data obtained through the IR cameras is further processed by the motion capture system, and the ground truth of each robot's pose is broadcasted wirelessly through a computer connected to the local network. The navigation targets are specified in the global frame of the motion capture system, and navigation objectives are set to autonomously reshuffle their positions $(\{\mathcal{T}_i\})$ amongst themselves. Since the navigation objectives are specified in the global frame, the agents subscribe to the motion capture system to localize themselves. The control inputs for safe navigation are computed based on local information of the operating environment sensed through the onboard LiDAR. 

To test the online reactive capabilities of the proposed algorithm in responding to uncertainties encountered during run-time, a scenario is created such that a human intervenes in the operating scene and its study is presented in Fig. \ref{fig:Turtlebot3Experiments}. The boundary of $L_i(\mathcal{F}^k_i)$ at the $k^{\text{th}}$ planning instance associated with $\mathcal{R}_i$ is represented to the red circles. During navigation, it could be observed $L_i(\mathcal{F}^k_i)$ is selected such that they reside within the bounds of safety (dynamically varying) identified based on the sensed information during run-time. Eventually, the agents safely converge to their respective $\mathcal{T}_i$. On average, the onboard (agent $\mathcal{R}_i$) computational time recorded on Raspberry Pi 4 for determining invariant sets for control computations was approximately 57 milliseconds at every planning instance.

\subsection{Simulation Results}
\label{sec:4A}
\begin{figure*}
    \centering
    \includegraphics[width = 2\columnwidth]{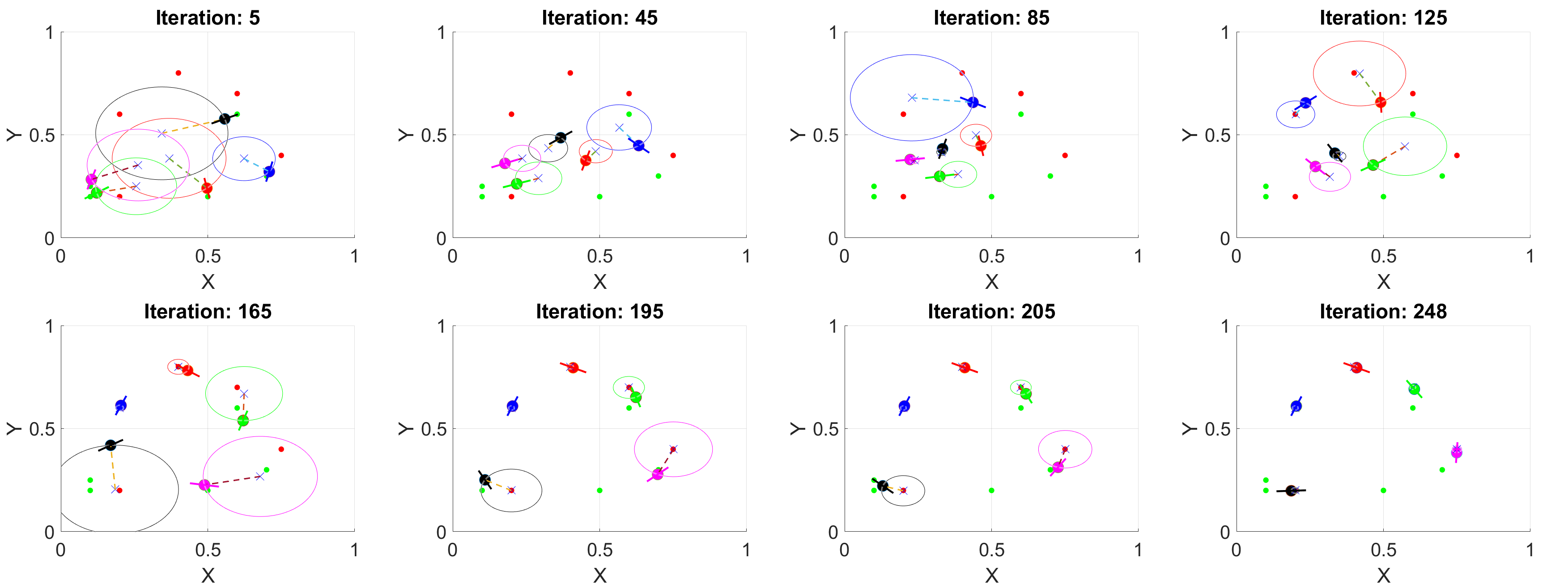}
    \caption{Stages of multi-agent navigation in a chronological sequence (row-wise, left-to-right) of planning instances.}
    \label{fig:Nav_snapshots_merged}
\end{figure*}

\begin{figure*}
    \centering
    \begin{subfigure}[b]{0.5\columnwidth}
    \includegraphics[width = \columnwidth]{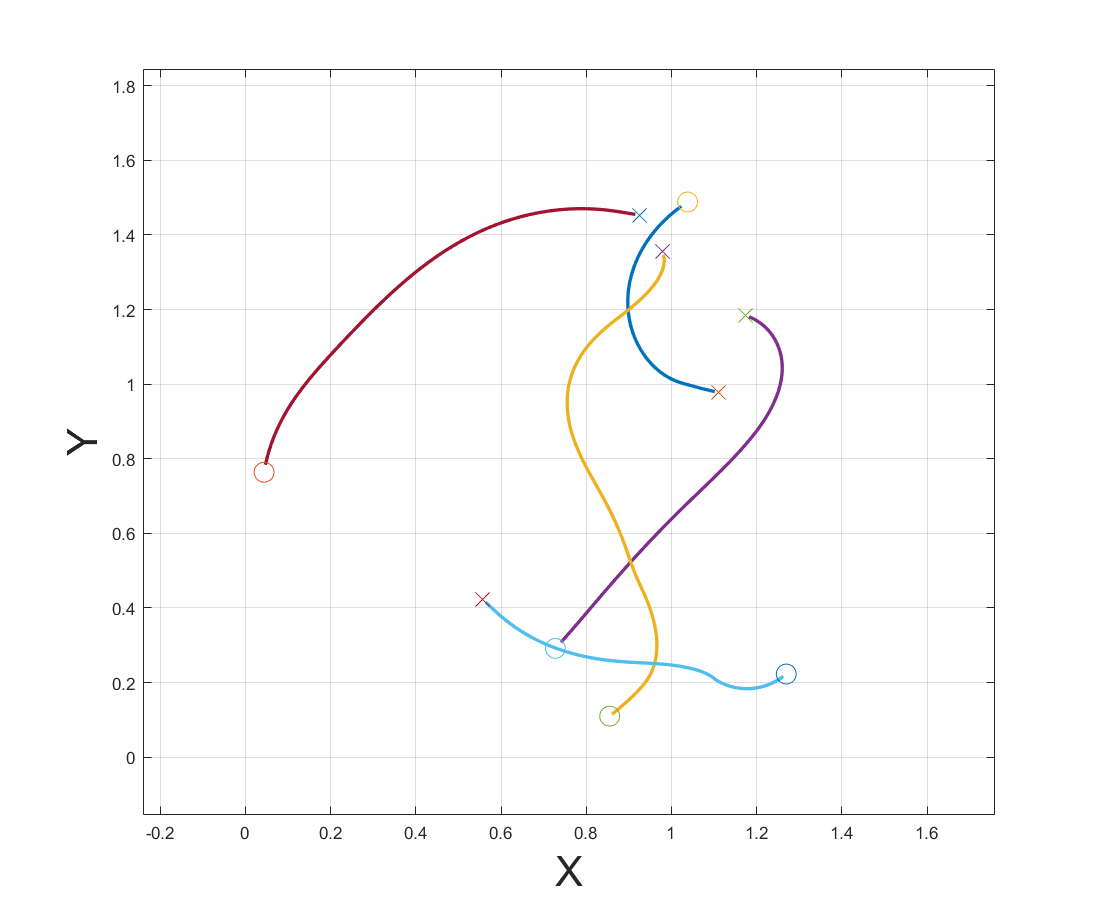}
    \caption{\textbf{Scenario 1: 5 agents}.}
    \label{fig:2D_1}
    \end{subfigure}
        \begin{subfigure}[b]{0.5\columnwidth}
    \includegraphics[width = \columnwidth]{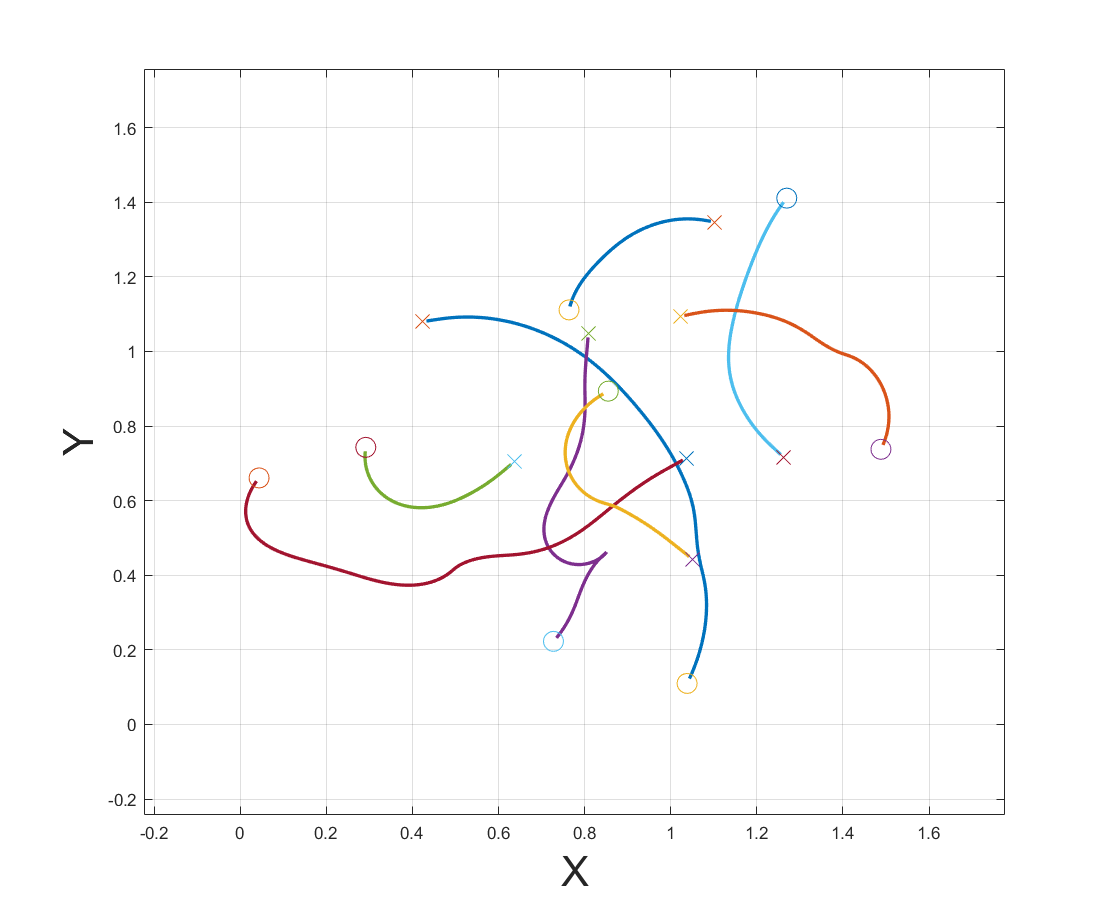}
    \caption{\textbf{Scenario 2: 8 agents}.}
    \label{fig:2D_2}
    \end{subfigure}
        \begin{subfigure}[b]{0.5\columnwidth}
    \includegraphics[width = \columnwidth]{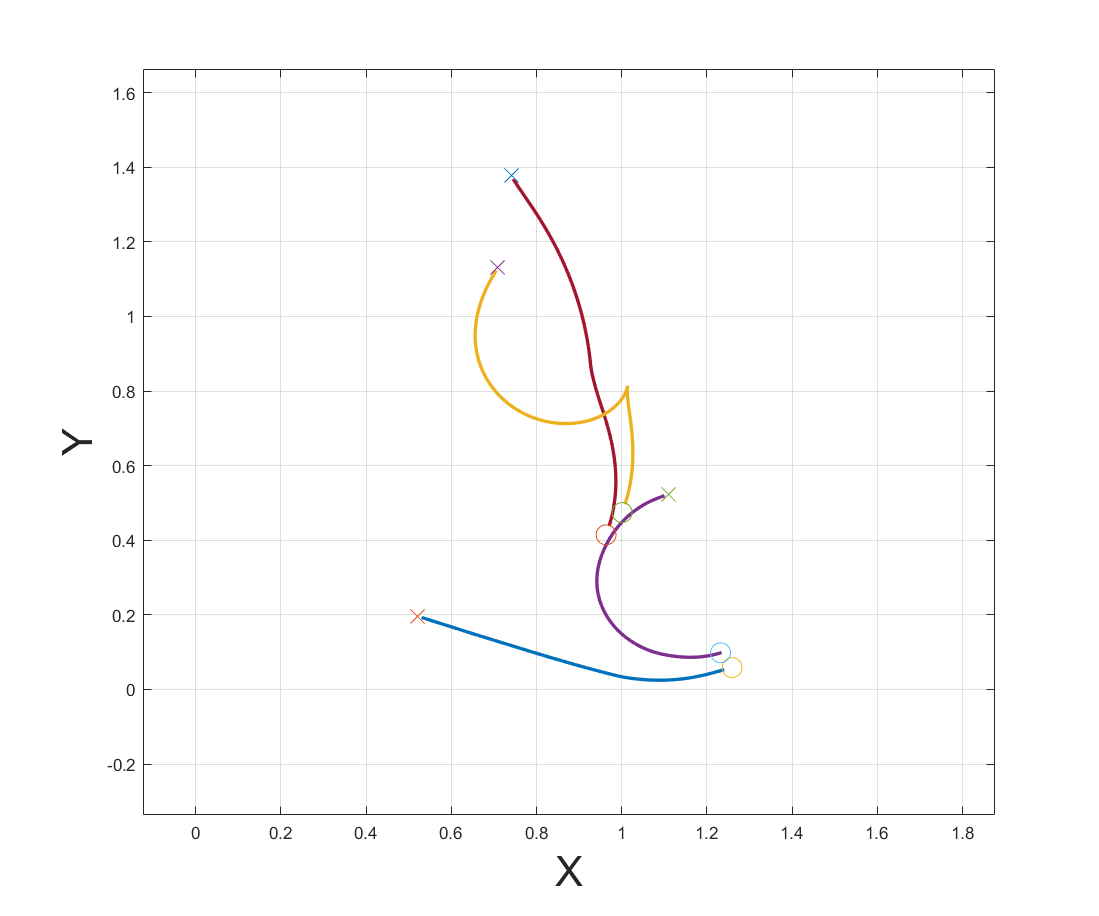}
    \caption{\textbf{Scenario 3: 4 agents}.}
    \label{fig:2D_3}
    \end{subfigure}
    \begin{subfigure}[b]{0.5\columnwidth}
    \includegraphics[width = \columnwidth]{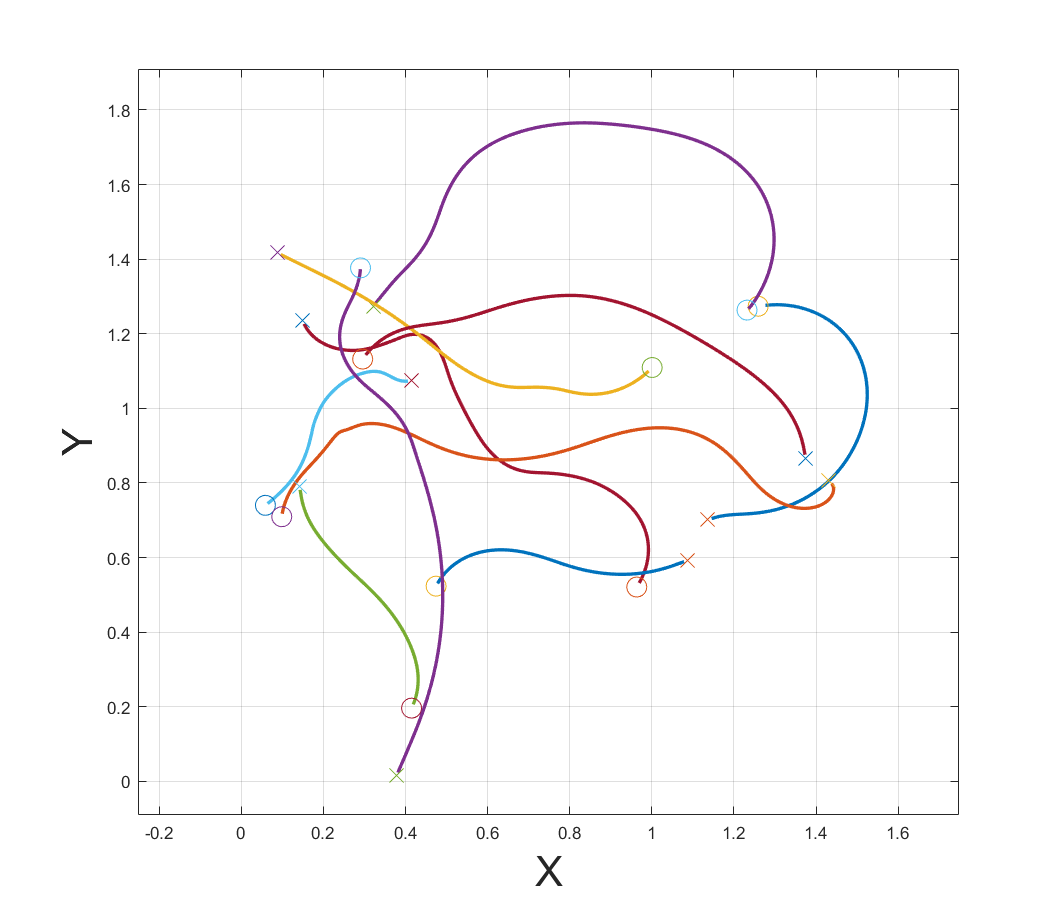}
    \caption{\textbf{Scenario 4: 10 agents}.}
    \label{fig:2D_4}
    \end{subfigure}
    \begin{subfigure}[b]{0.5\columnwidth}
    \includegraphics[width = \columnwidth]{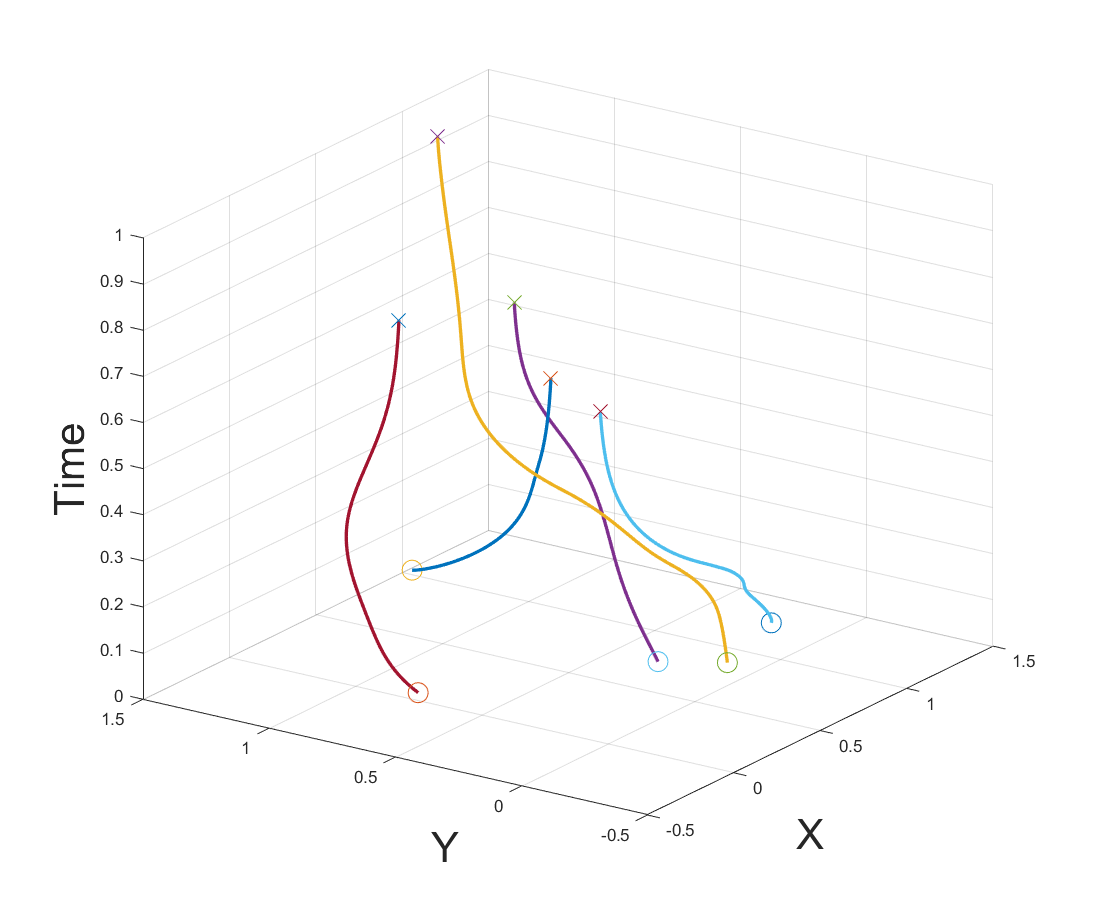}
    \caption{\textbf{Scenario 1: 5 agents}.}
    \label{fig:3D_1}
    \end{subfigure}
    \begin{subfigure}[b]{0.5\columnwidth}
    \includegraphics[width = \columnwidth]{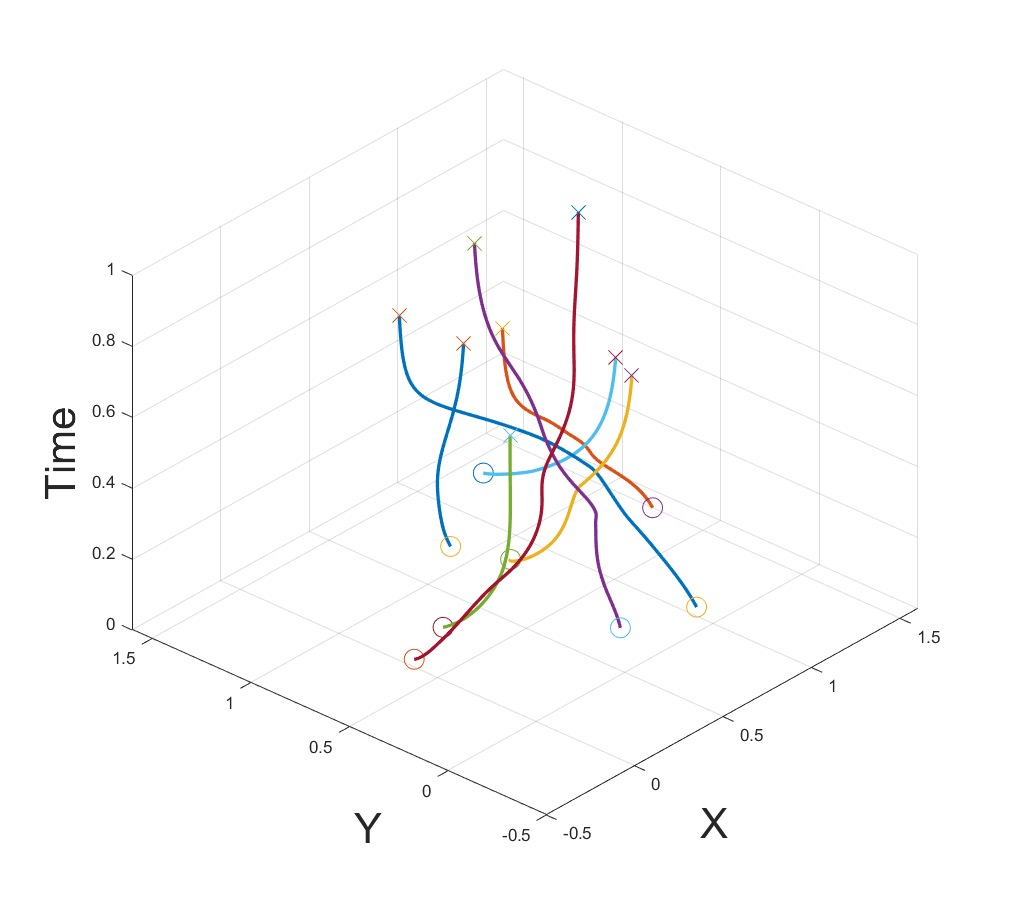}
    \caption{\textbf{Scenario 2: 8 agents}.}
    \label{fig:3D_2}
    \end{subfigure}
    \begin{subfigure}[b]{0.5\columnwidth}
    \includegraphics[width = \columnwidth]{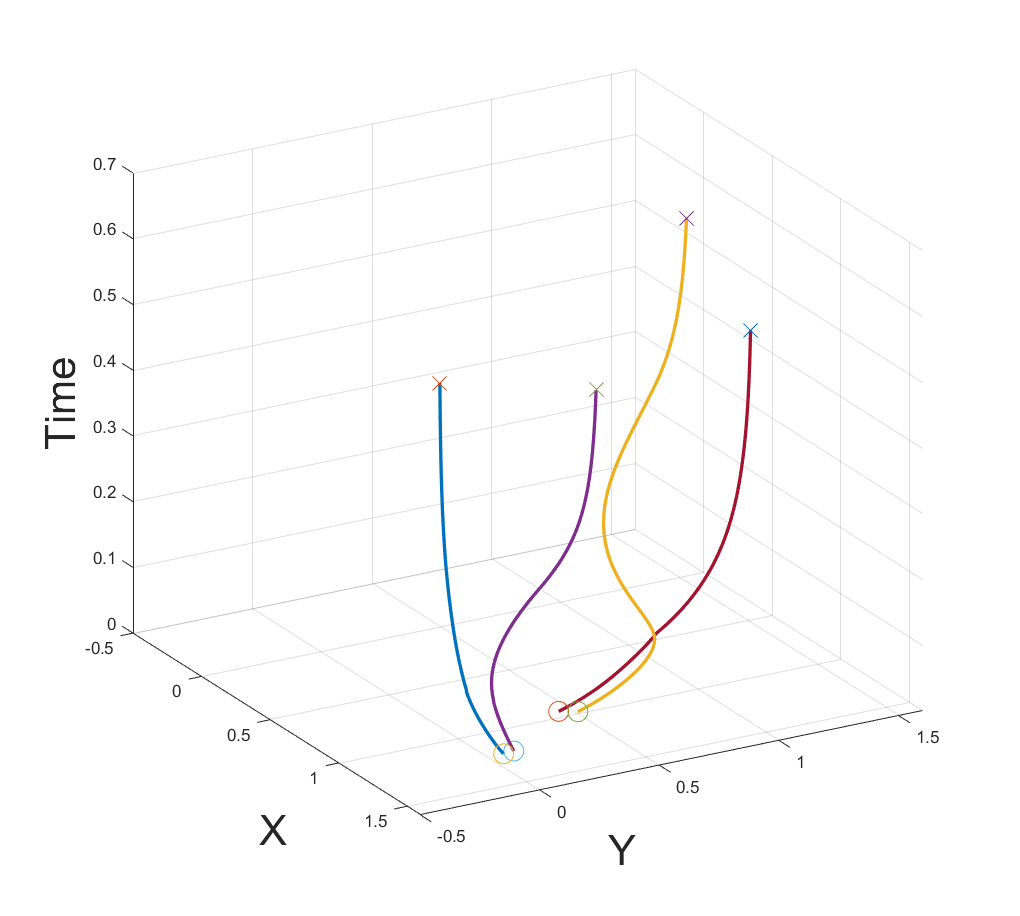}
    \caption{\textbf{Scenario 3: 4 agents}.}
    \label{fig:3D_3}
    \end{subfigure}
    \begin{subfigure}[b]{0.5\columnwidth}
    \includegraphics[width = \columnwidth]{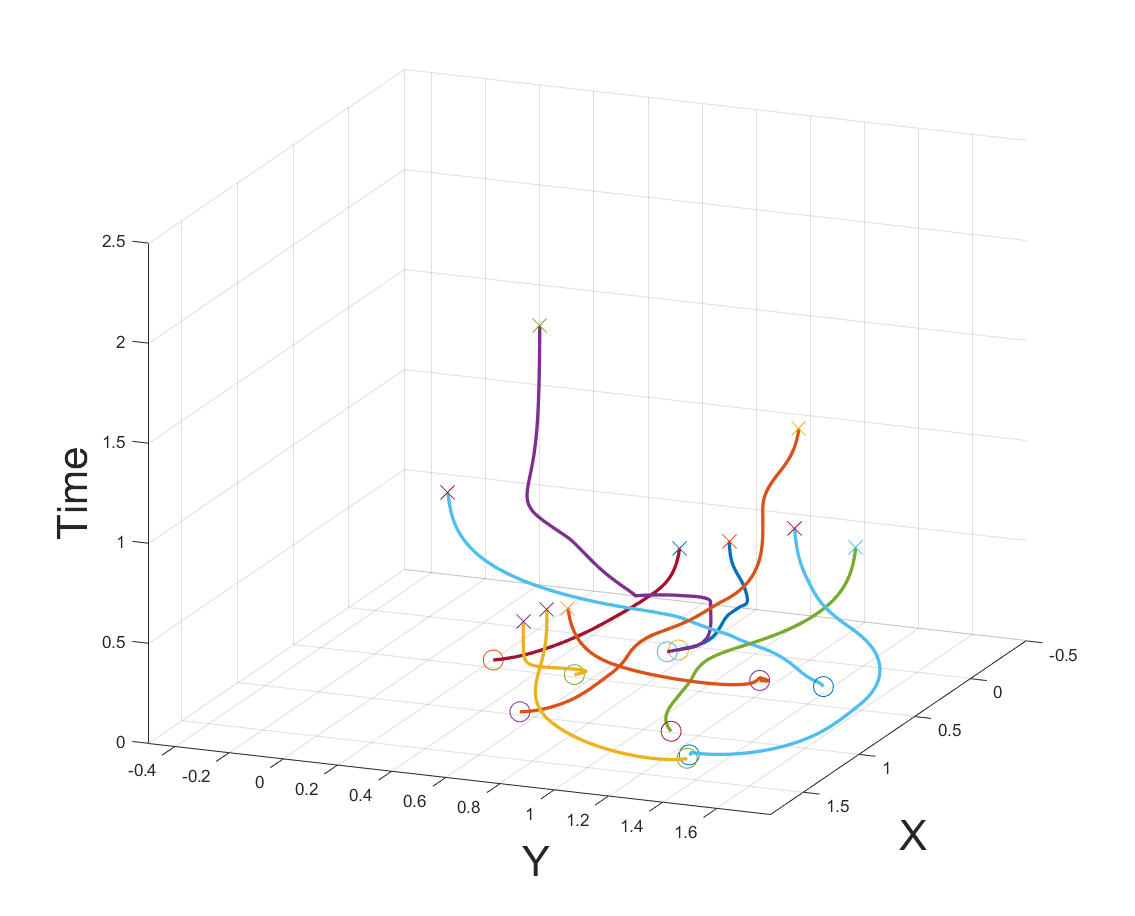}
    \caption{\textbf{Scenario 4: 10 agents}.}
    \label{fig:3D_4}
    \end{subfigure}
    \caption{Example multi-agent navigation scenarios with varying number of agents. The plots in vertical alignment represent the same scenario with the plots in the bottom row representing the collision-free nature of the naturally induced trajectories.}
    \label{fig:MultipleSimRuns}
\end{figure*}

The self-navigation algorithm on each $\mathcal{R}_i$ is tested through simulations over multiple test cases. Each test case consists of varying numbers of agents with each $\mathcal{R}_i$ assigned an arbitrary target $\mathcal{T}_i$ for navigation. In the simulations, the angular space in the ego-centric frame of $\mathcal{R}_i$ has 64 angular discretizations at which the range measurements are obtained.

In Fig,~\ref{fig:Nav_snapshots_merged}, different stages of an example navigation scenario involving multiple non-holonomic agents (agents) are illustrated. The colored dots with the line through them indicate the position of the agents with their instantaneous orientation. 
The green dots represent the initial points from which the agents begin their navigation and the red dots are the corresponding target points of navigation $\mathcal{T}_i$. The boundary of the invariant set associated with the onboard 
feedback controller $\mathcal{F}_i(R^k_i,\psi^k_i;\mathcal{W}^k_i)$ of each of the agents is represented through the corresponding colored circles. The identified target point $\mathcal{W}^k_i$ based on which the feedback control inputs are computed, is represented through blue $\times$. From the simulation results, it could be observed that during each planning instance, the proposed algorithm identifies invariant sets such that they do not violate the bounds of safety based on the measurements obtained during run-time for any given $\mathcal{R}_i$. Eventually, $\mathcal{W}^k_i$ approaches and converges onto their respective $\mathcal{T}_i$ using the proposed planning strategy.

In Fig. $\ref{fig:MultipleSimRuns}$, the proposed algorithm is tested on multiple scenarios with varying numbers of agents and initial conditions. The plots on the same vertical alignment represent similar operating scenarios. The top row consists of the path induced by the control inputs on the 2D operating workspace generated through the proposed algorithm. The plots on the bottom row represent the collision-free position trajectories induced by deploying the control inputs generated through the proposed algorithm. In each of the plots, the symbols `$\circ$' and `$\times$' represent the starting and the target points for each agents.

\subsection{Comparisons}
\label{sec:4B}
In this section, a comparison is presented in Table~\ref{table:Comparisons} between the proposed navigation algorithm, and a decoupled strategy where a straight-line path is pre-planned with the agents equipped with an onboard reactive navigation algorithm to avoid collisions while attempting to track the same.

Since curvature is a measure of the smoothness of the induced paths, the proposed algorithm clearly outperforms decoupled strategies in this aspect as the average path curvatures recorded on deploying the proposed algorithm are consistently much smaller when compared to its counterpart. The larger path curvatures induced is due to the requirement of strict adherence to pre-planned paths. The resulting path lengths traversed by each agent are also comparable in the scenarios under comparison. The slightly increased path lengths in the case of the proposed algorithm are due to the resulting smoothness of turns executed by the agents during navigation. 

\begin{table*}[t]
\caption{Comparisons - Proposed Algorithm vs Decoupled Strategy (Random initial conditions)}
\label{table:Comparisons}
\begin{center}
\begin{tabular}{|m{5em}| |m{9em}| |m{8em}| |m{5em}| |m{9em}| |m{8em}|} 
 \hline
 \textbf{Number of Agents} & \textbf{Path Length Traversed (Decoupled)} & \textbf{Path Length Traversed (Proposed)} & \textbf{Path length difference} & \textbf{Average Path Curvature (Decoupled)} & \textbf{Average Path Curvature (Proposed)} \\ [0.4ex]
  \hline  
  \multirow{4}{*}{$4$}
  & $3.19$  &$2.79$ &-0.4 & $(2.31\times 10^2)$ &$2.92$\\
  \cline{2-6}
 & $2.47$  &$2.66$ &+0.19 & $(2.95\times 10^2)$ &$5.08$\\
 \cline{2-6}
 & $3.56$  &$3.64$ &+0.08 & $(2.71\times 10^2)$ &$11.9$\\
 \cline{2-6}
 & $3.15$  &$3.06$ &-0.09 & $(2.56\times 10^2)$ &$3.04$\\
 \hline
 \hline
 \multirow{4}{*}{$4$}
  & $3.58$  &$3.76$ &-0.18 & $(2.18\times 10^2)$ &$4.69$\\
 \cline{2-6}
 & $1.38$  &$1.27$ &-0.09 & $(2.23\times 10^2)$ &$3.11$\\
 \cline{2-6}
 & $3.58$  &$3.68$ &+0.1 & $(2.38\times 10^2)$ &$4.54$\\
 \cline{2-6}
 & $2.33$  &$2.39$ &+0.06 & $(2.78\times 10^2)$ &$4.03$\\
 \cline{2-6}
 \hline
 \hline
 \multirow{4}{*}{$4$}
   & $0.69$  &$0.702$ &+0.012 & $(0.34\times 10^2)$ &$5.29$\\
 \cline{2-6}
   & $2.61$  &$2.97$ &+0.36 & $(2.88\times 10^2)$ &$3.83$\\
 \cline{2-6}
   & $5.04$  &$3.2$ &-1.8 & $(2.79\times 10^2)$ &$6.93$\\
 \cline{2-6}
   & $1.44$  &$1.4$ &-0.04 & $(2.86\times 10^2)$ &$5.94$\\
 \cline{2-6}
 \hline
 \hline
\end{tabular}
\end{center}
\end{table*}

\subsection{Discussion - Measurement uncertainties}
\label{sec:4D}
In practice, the onboard LiDAR measurements are often noisy. The quality of measured data obtained during run-time depends on environmental factors such as the reflectivity, textures of the reflecting surfaces, interference from other LiDARs (multi-agent scenarios), and other light sources. Furthermore, determining the precise velocities of the neighboring agents is even more arduous through on-board sensing. In general, a lot of computational overhead is required to determine the velocities. Therefore, to truly ensure safety during operation, the onboard motion planning algorithms should be capable of dealing with these uncertainties. In this context, the following modifications are proposed for $\mathcal{C}_m$ based on which $\mathcal{D}^k_i$ is constructed in section \ref{sec:3A}. Assuming $\mathcal{R}_i$ is aware of the velocity limitations (say, $|\vec{v}_m| \leq v_{\max}$) of moving agents in its vicinity $(\vec{v}_m = |\vec{v}_m|\hat{v}_m)$,
\begin{itemize}
    \item When there are significant uncertainties in both the measured direction as well as the magnitude of $\vec{v}_m$.
\begin{align}
    d_{\min} (P_m,L^k_i) > \frac{v_{\max}}{f_p}
    \label{eqn:mod_vel_consts_1}
\end{align}
    \item When there are significant uncertainties in the measured direction $\hat{v}_m$, but the magnitude $|\vec{v}_m|$ is available to a good degree of accuracy.
    \begin{align}
        d_{\min} (P_m,L^k_i) > \frac{|\vec{v}_{m}|}{f_p}
        \label{eqn:mod_vel_consts_2}
    \end{align}
    \item When there are significant uncertainties in the magnitude $|\vec{v}_m|$, but its direction $\hat{v}_m$ is available to a good degree of accuracy.
    \begin{align}
        d_{\hat{v}_m} (P_m,L^k_i) > \frac{v_{\max}}{f_p}
        \label{eqn:mod_vel_consts_3}
    \end{align}
\end{itemize}

The equations \eqref{eqn:mod_vel_consts_1}, \eqref{eqn:mod_vel_consts_2}, \eqref{eqn:mod_vel_consts_3} are independent of the quantities with significant uncertainties, and are replaced with the conditions corresponding to their worst case scenarios. Consequently, they result in conservative bounds for selecting $\mathcal{W}^k_i$ in comparison to the proposed $\mathcal{C}_m$ (as in section $\ref{sec:3B}$) but without compromising on safety at each planning instance.
\section{EXPERIMENTS ON PARALLELIZATION}
\label{sec:5}
\begin{figure*}
    \centering
    \begin{subfigure}[b]{\columnwidth}
    \includegraphics[width = \columnwidth]{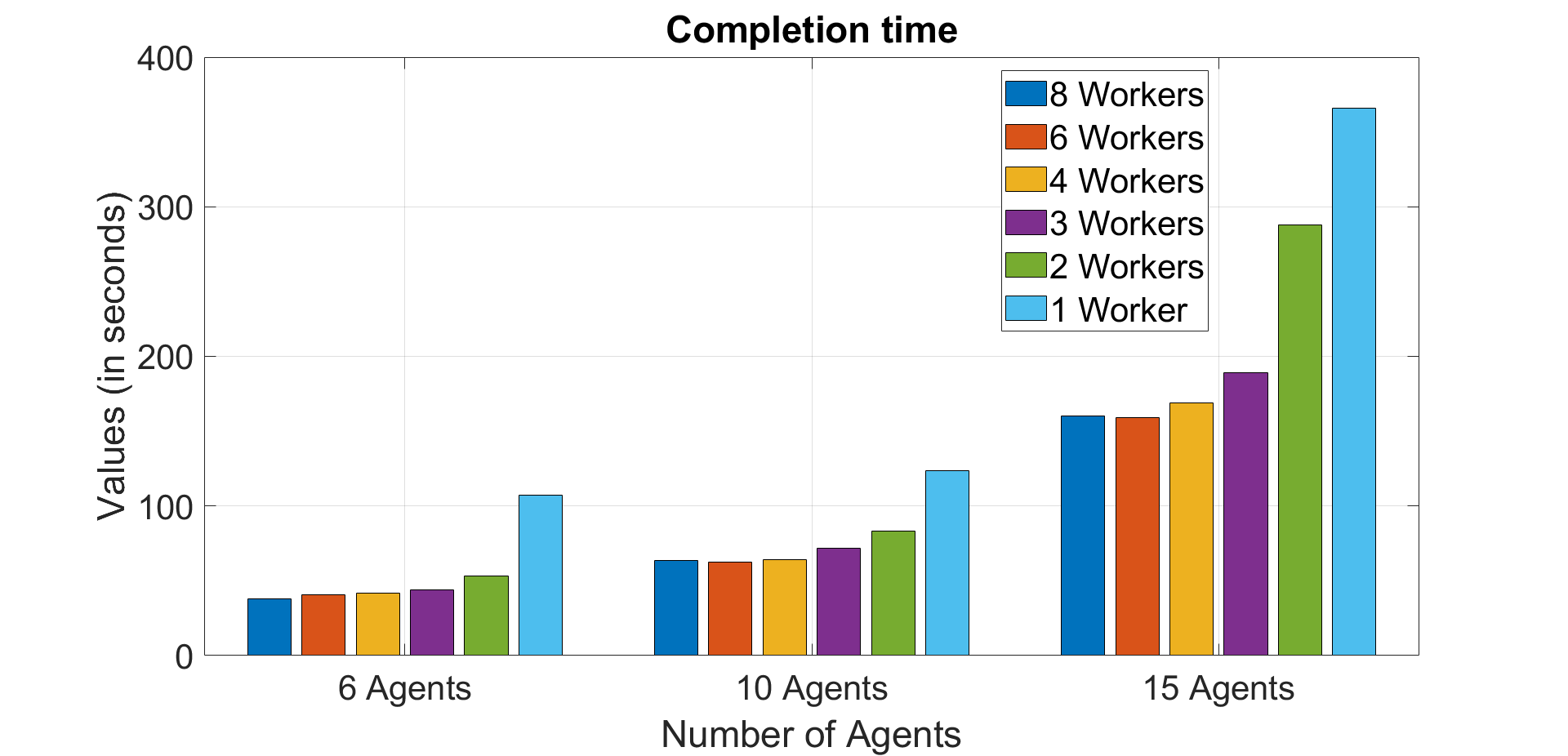}
    \caption{Total completion time with parallelization}
    \label{fig:TotalComputationalTime}
    \end{subfigure}
        \begin{subfigure}[b]{\columnwidth}
    \includegraphics[width = \columnwidth]{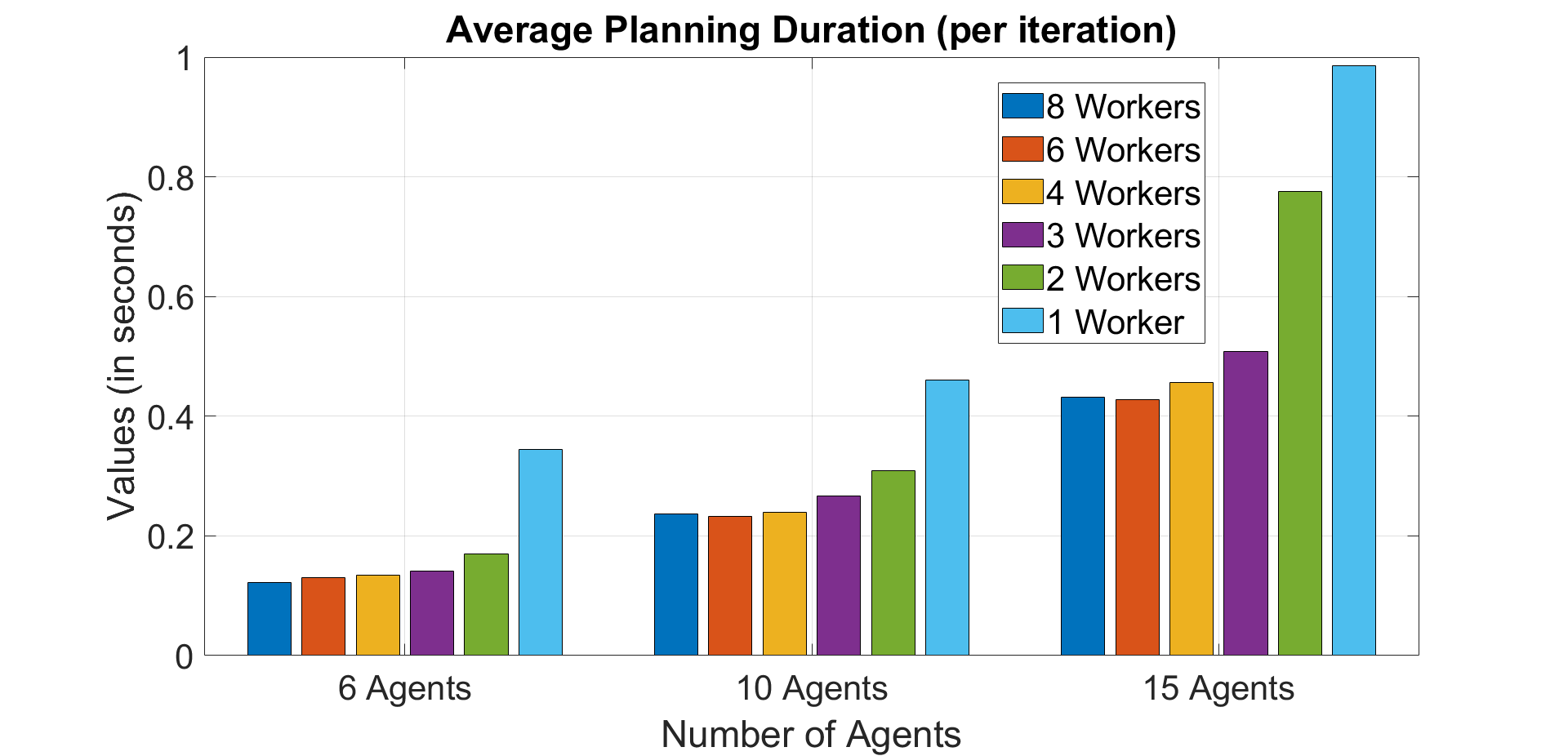}
    \caption{Planning duration per iteration with parallelization}
    \label{fig:AvgPlanperIter}
    \end{subfigure}
        \begin{subfigure}[b]{\columnwidth}
    \includegraphics[width = \columnwidth]{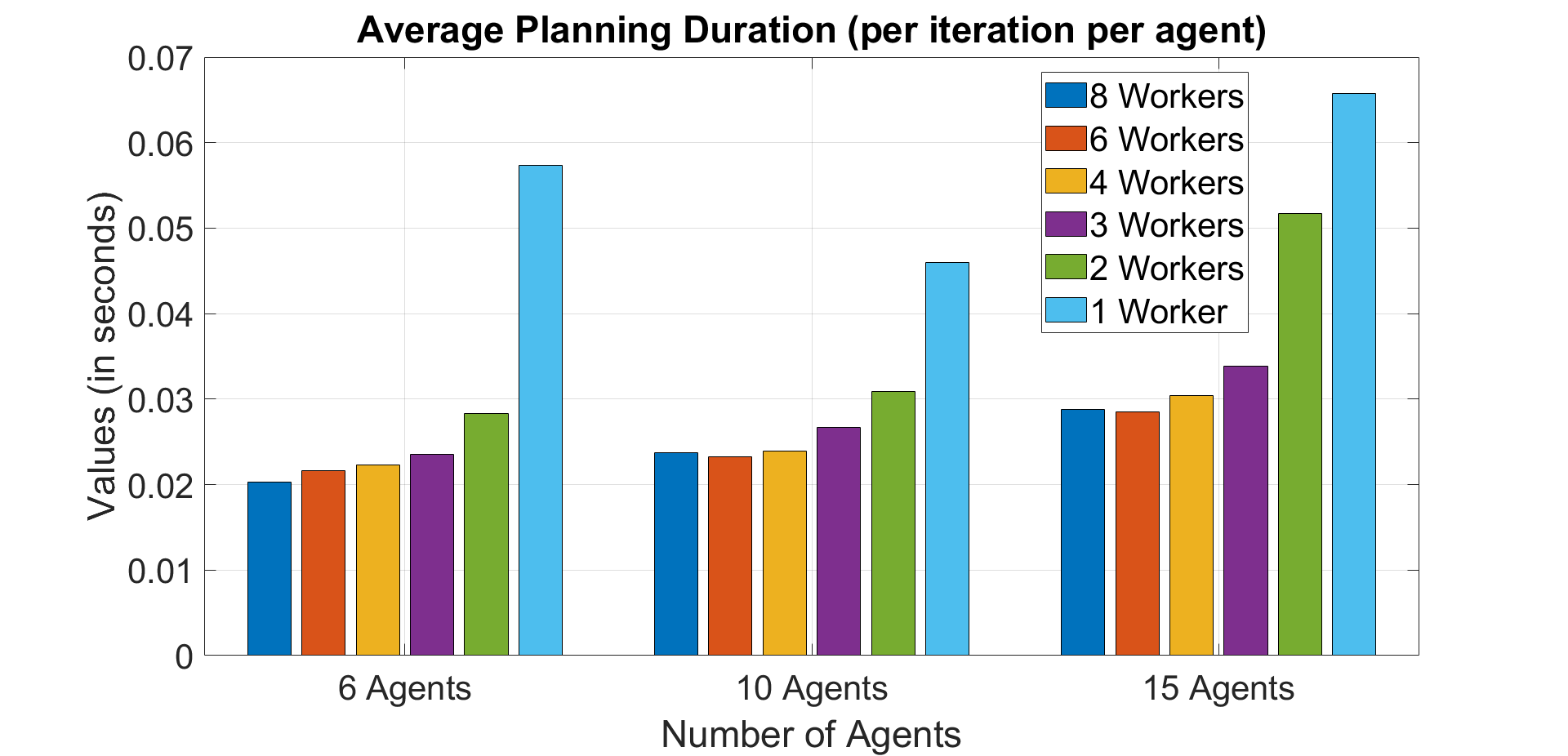}
    \caption{Planning duration per iteration per agent with parallelization}
    \label{fig:AvgPlanperIterperAgent}
    \end{subfigure}
    \begin{subfigure}[b]{\columnwidth}
    \includegraphics[width = \columnwidth]{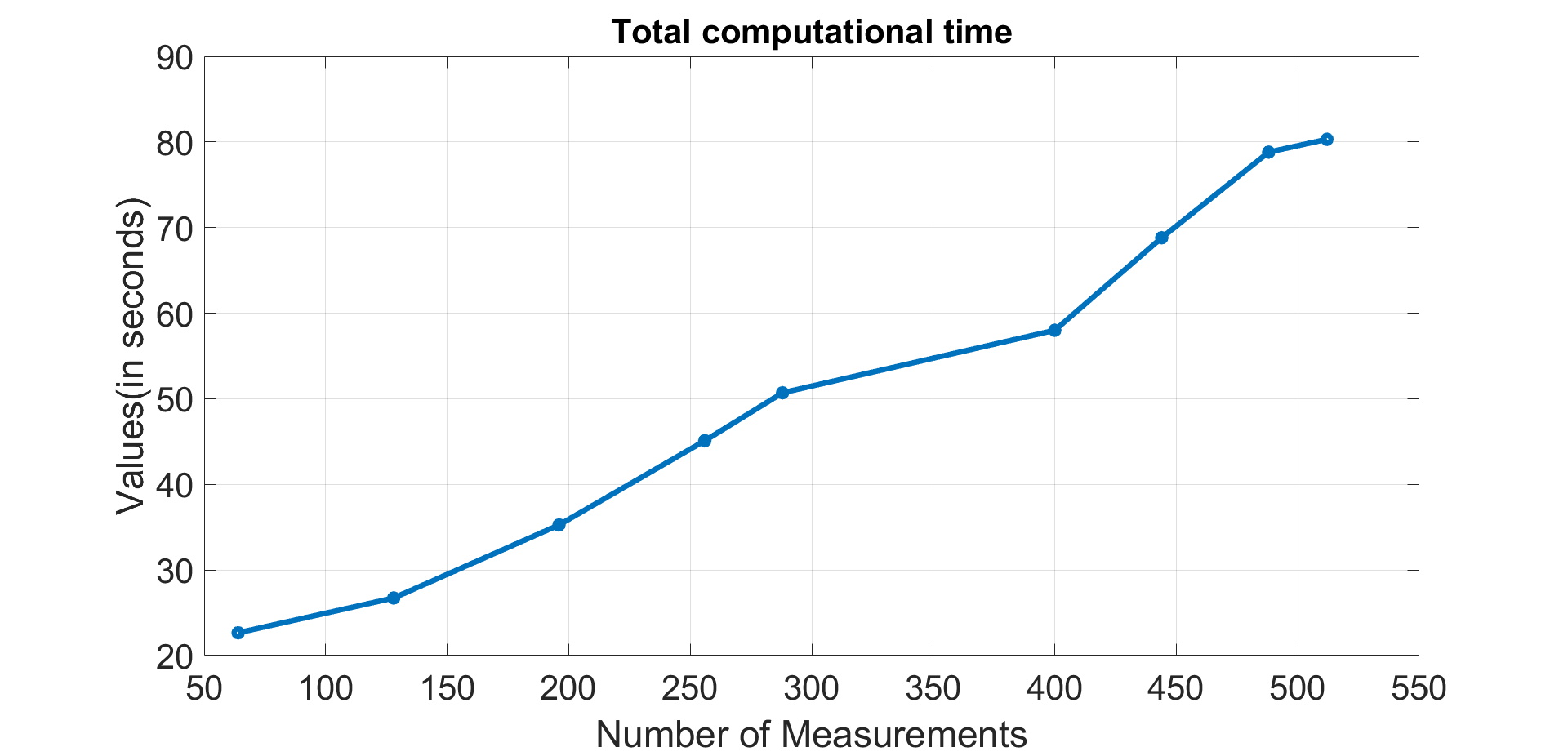}
    \caption{Computational time with an increase in the number of range measurements from the onboard LiDAR (8 workers)}
    \label{fig:MeasurementsCompare}
    \end{subfigure}
    \caption{Computational trends recorded through numerical experiments involving parallel processing.}
\end{figure*}
During each planning instance, the function $\mathcal{D}^k_i$ in the section \ref{sec:3} is constructed online through $\mathcal{M}_i$ based on the sensed information during run-time. A key feature of this construction as in $\eqref{eqn:Opti_Def}$ is that each iteration involved in computing the entries of $\mathcal{D}^k_i$ is independent of each other. Consequently, the computational load involved in constructing $\mathcal{D}^k_i$ can be distributed over multiple computing units operating in parallel. The block diagram for parallel processing is presented in Fig.~\ref{fig:ParallelPlannerFunc} and the associated modification to parallelize the main algorithm \ref{alg:Proposed algorithm} is illustrated through a flowchart in Fig.~\ref{fig:ParallelFlow}. Therefore, given the availability of multiple such computing units on the processor, the proposed planner provides a means to achieve hardware acceleration to hasten online computations.

To quantify the performance of the proposed algorithm, numerical experiments are carried out using the parallel computing toolbox of MATLAB in this section. They are carried out on the HP Zbook Power G8 mobile workstation PC which runs on a 11th Gen i7 processor, 32 GB RAM, and has an inbuilt Nvidia Quadro T1200 graphics processing unit (4 GB DDR6). The proposed algorithm is evaluated through the following metrics:
\begin{itemize}
    \item \textbf{Number of Agents:} Total number of agents in operation for which control commands are computed at each planning instance.
    \item \textbf{Number of workers:} Total number of computing units operating in parallel.
    \item \textbf{Completion Time:} For a random set of initial conditions, this is the total time recorded until convergence is achieved for every agent under consideration. 
    \item \textbf{Average planning duration:} Each iteration consists of computing a set of control commands for every operating agent based on their individual measurements. This metric characterizes the average recorded time of computations.
    \begin{itemize}
        \item \textbf{Average planning duration (per iteration):} This duration is a cumulative measure of computational time elapsed per iteration considering every agent.
        \item \textbf{Average planning duration (per iteration per agent):} This is the average computational time elapsed for each agent within an iteration.
    \end{itemize}
    \item \textbf{Number of measurements:} Total number of simulated LiDAR measurements based on which safe control commands are computed for every agent at each instance.
\end{itemize}

The following trends could be observed from the data obtained through the numerical experiments.

\begin{itemize}
    \item No collisions were observed in any scenario under which the numerical experiments were performed.
    \item An increase in the number of agents resulted in larger overall completion times, and this is a direct reflection of the total number of iterations that elapsed until every agent converged to their respective target points.
    \item An expected trend of decrease in the average planning iteration duration was observed with an increase in the number of workers deployed for online computations A hardware acceleration of up to 3 times lesser computational times is achieved. 
    \item An increase in the number of measurements resulted in increased computational times. Since every measurement is taken into consideration (for construction of $\mathcal{D}_k^i$) for computing safe control inputs - $U(R^k_i,\psi^k_i;\mathcal{W}^k_i)$, this is a natural trend that is expected.
\end{itemize}

\begin{figure}
    \centering
    \includegraphics[width = \columnwidth]{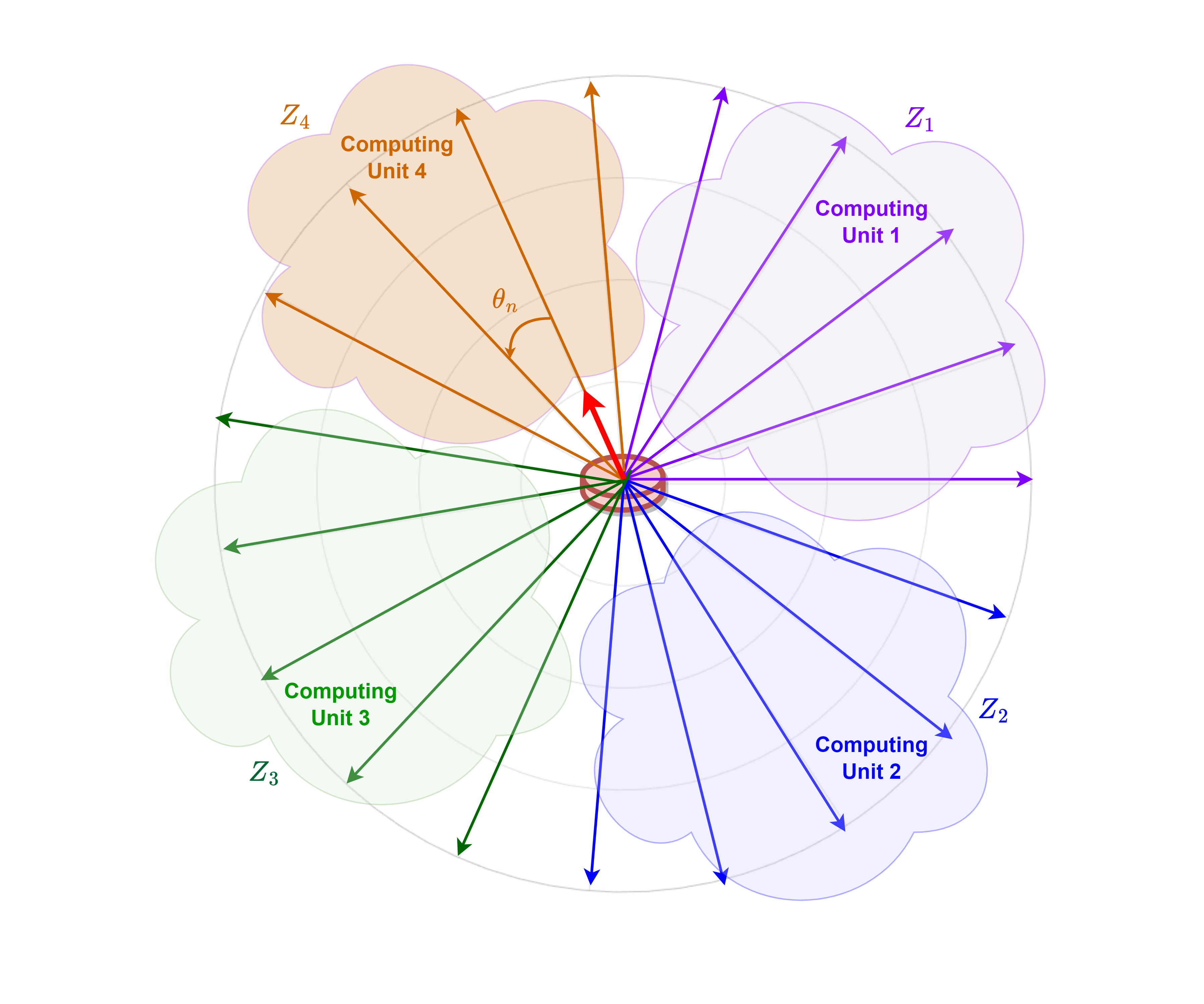}
    \caption{Parallelization of the planner function $\mathcal{D}^k_i$ construction through the distribution of the computational load.}
    \label{fig:ParallelPlannerFunc}
\end{figure}

\begin{figure}
    \centering
    \includegraphics[width = \columnwidth]{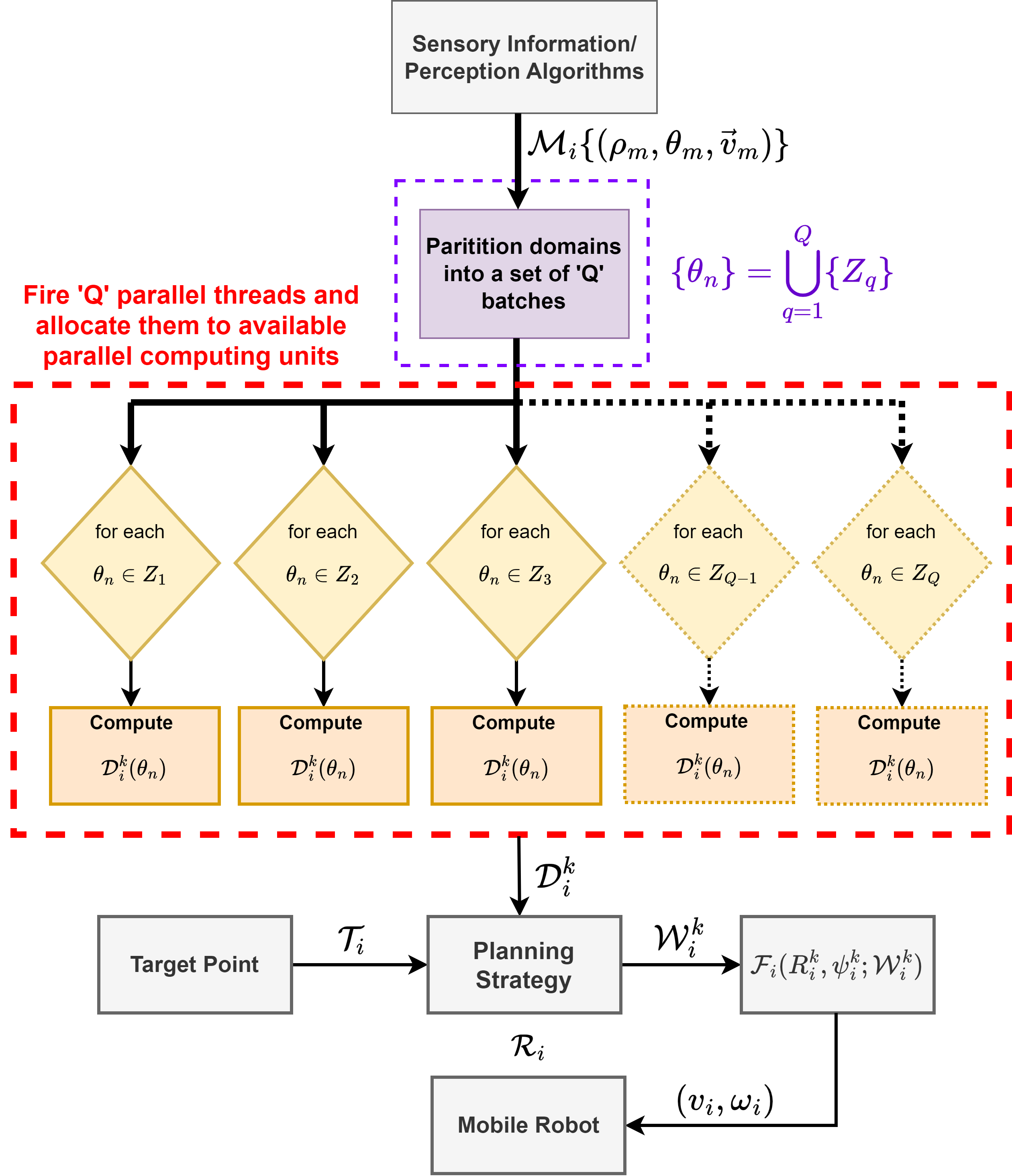}
    \caption{Parallelization in each iteration of Algorithm \ref{alg:Proposed algorithm}}
    \label{fig:ParallelFlow}
\end{figure}

\section{CONCLUSIONS \& FUTURE WORK}
\label{sec:6}
A novel, fast, and safe self-navigation algorithm for non-holonomic agents operating in crowded multi-agent scenarios is proposed in this work. The key novelties of this work are in avoiding any explicit collision avoidance method and in embedding the motion constraints of the unicycle in the proposed sensor-based controller design with a parallel implementation option. We develop an input-constrained feedback controller, and the development of a motion plan for guaranteed safe navigation by describing invariant sets as state constraints within the bounds of safety identified during run-time. The elegance of the proposed geometric structure of the invariant sets utilized in constructing the planner function is in deterministically dealing with uncertainties associated with the measurements/estimates obtained during run-time. 
Consequently, the proposed algorithm in this work is an independent sensor-based solution for each agent in any multi-agent navigation problem where it can have two components! coordinating and non-coordinating. The potential of the planner is in exploring learning-based methods for identifying intelligent strategies involving feedback controllers.

\bibliographystyle{unsrt}
\bibliography{reference.bib}

\begin{IEEEbiography}[{\includegraphics[width=2.5cm,height=3.2cm,keepaspectratio]{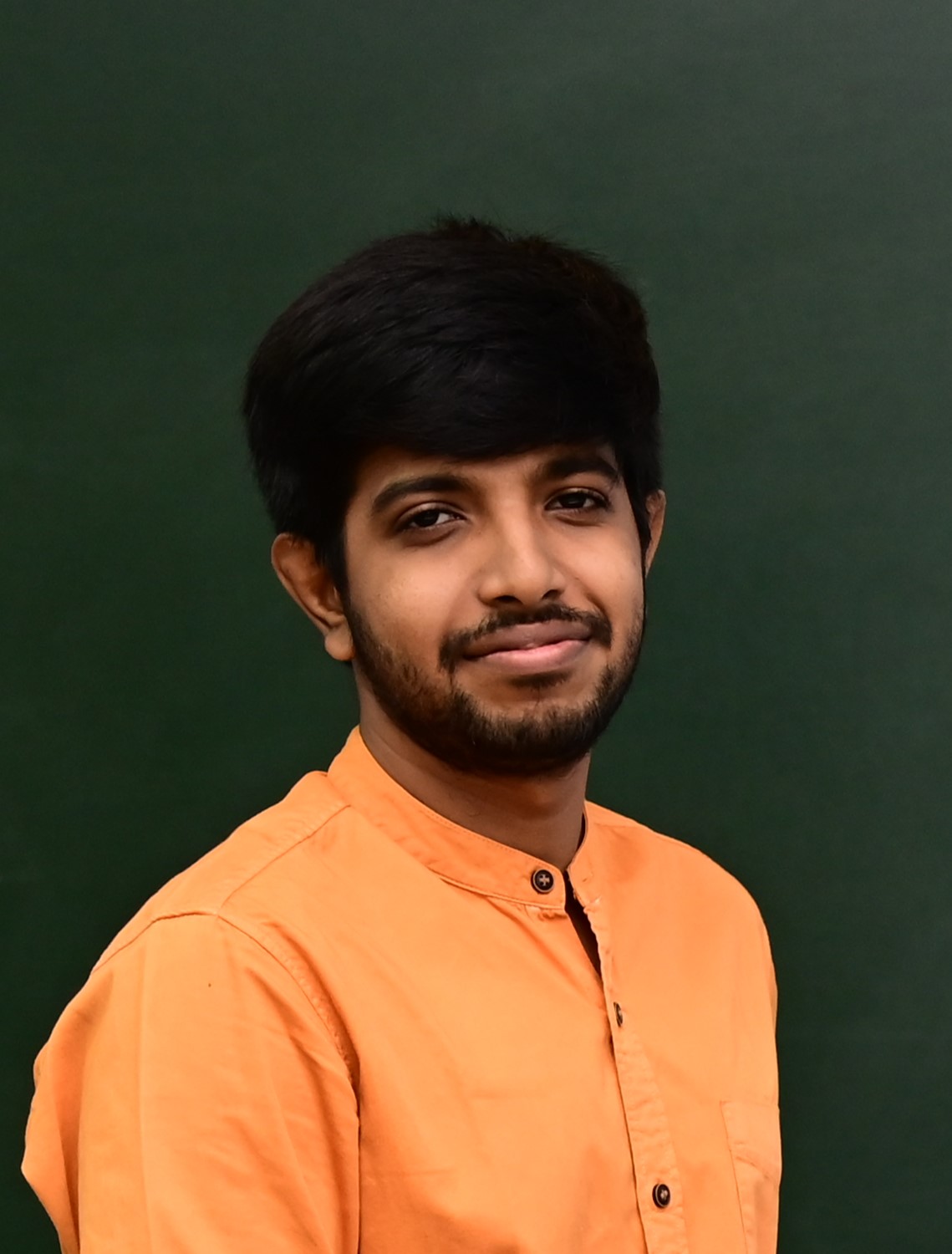}}]{VEEJAY KARTHIK J}{\space} received the B.Tech degree in Electrical and Electronics Engineering from National Institute of Technology Tiruchirappalli, Tamilnadu, India. He is currently pursuing M.Tech+PhD Dual Degree in Systems and Control Engineering at the Indian Institute of Technology Bombay, Mumbai, India. He is a recipient of the prestigious Prime Minister’s Research Fellowship. His primary research interests primarily lie in solving planning and control problems in mobile robotics from a holistic point of view.
\end{IEEEbiography}

\begin{IEEEbiography}[{\includegraphics[width=2.5cm,height=3.2cm,keepaspectratio]{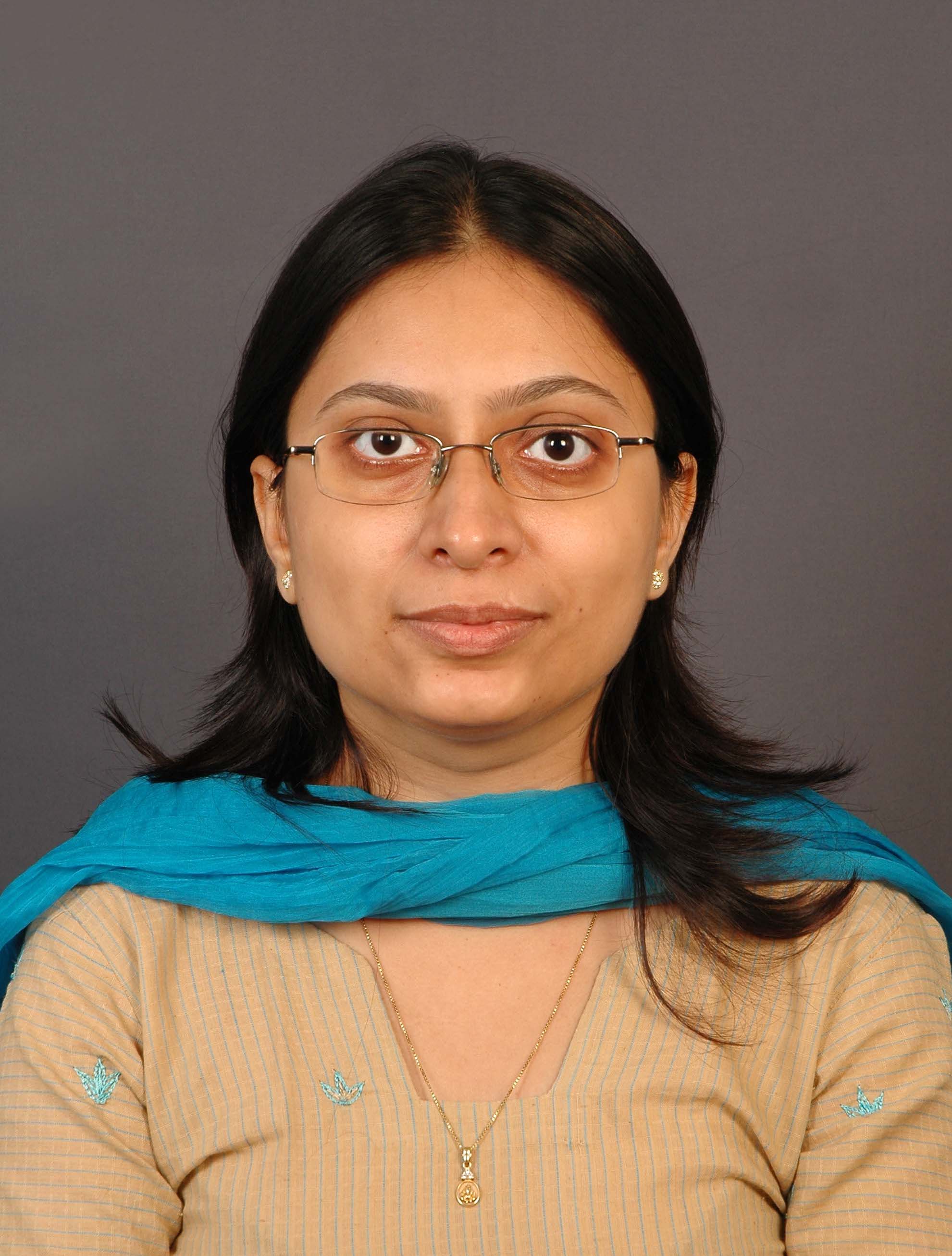}}]{LEENA VACHHANI}{\space} received the Ph.D. degree in embedded robotics from IIT Madras, Chennai, India, in 2009. She is currently a Professor with the Systems and Control Engineering Group, Indian Institute of Technology Bombay, Mumbai, India. She has contributed in the areas of embedded control and robotic applications that include topics on multiagent mapping, exploration, patrolling, and coverage. She has developed laboratories on embedded control systems, autonomous robots, and multiagent systems with unique concepts. She has been a Faculty Advisor of AUV (autonomous underwater vehicle)-IITB team since its inception in 2010. She is also Professor-in-Charge of Technology Innovation Hub (TIH), IIT Bombay, established under National Mission on Interdisciplinary Cyber-Physical Systems. Her current research interests include perception modeling for single- and multiagent applications, edge computing for IoT, and multiagent applications for IoT framework.
\end{IEEEbiography}

\end{document}